%% file: cas-dc-template.tex
\documentclass[a4paper]{cas-dc}

\usepackage[authoryear,longnamesfirst]{natbib}
\usepackage{graphicx}
\usepackage{subcaption}
\usepackage{algorithm}
\usepackage{algpseudocode}
\usepackage{enumitem}
\usepackage{multirow}
\usepackage{tabularx}
\usepackage{threeparttable}
\usepackage{float}
\usepackage{placeins}
\usepackage{color, xspace}
\usepackage{appendix}
\usepackage[table]{xcolor}
\usepackage{tcolorbox}
\usepackage{placeins}
\usepackage{xcolor}
\usepackage{booktabs}     
\usepackage{multirow}     
\usepackage{amsmath}      
\usepackage{array}        
\usepackage{caption}      
\usepackage{balance}
\usepackage{booktabs}
\usepackage{siunitx}
\usepackage{makecell}
\usepackage{amsthm}
\usepackage{amsmath,amssymb,amsfonts}

\theoremstyle{plain}        
\newtheorem{theorem}{Theorem}
\newtheorem{lemma}{Lemma}

\theoremstyle{definition}    

\def\tsc#1{\csdef{#1}{\textsc{\lowercase{#1}}\xspace}}
\tsc{WGM}
\tsc{QE}
\tsc{EP}
\tsc{PMS}
\tsc{BEC}
\tsc{DE}

\begin{document}
\let\WriteBookmarks\relax
\def\floatpagepagefraction{1}
\def\textpagefraction{.001}

\shorttitle{TensorSSM: Memorizing Spatial Position in Dynamic State Space}

\shortauthors{Sheng Zhang et~al.}

\title [mode = title]{TensorSSM: Memorizing Spatial Position in Dynamic State Space for Tensor Core Generation}                      



%
\author[1]{Sheng Zhang}[type=editor,
                        orcid=0009-0006-1758-6708]

\fnmark[1]

\ead{szhang844-c@my.cityu.edu.hk}

\ead[url]{https://szhang-cityu.github.io/}

\credit{Conceptualization of this study, Methodology, Experiment, Writing}

\affiliation[1]{organization={City University of Hong Kong},
    addressline={Tat Chee Avenue 83}, 
    city={Hong Kong},
    postcode={999077}, 
    country={China}}

\author[2]{Xinyi Wang}[type=editor,
                        orcid=0009-0009-1741-5310]

\fnmark[1]

\ead{xw3128@columbia.edu}

\credit{Methodology, Experiment, Writing}

\affiliation[2]{organization={Columbia University},
    addressline={116th St \& Broadway}, 
    city={New York},
    postcode={10027}, 
    state={NY},
    country={USA}}

\author[1]{Maolin Wang}[type=editor,
                        orcid=0000-0002-0073-0172]

\cormark[1]

\ead{morin.wang@my.cityu.edu.hk}

\ead[url]{https://morin.wang/}

\author[1]{Xiangyu Zhao}[type=editor,
                        orcid=0000-0003-2926-4416]

\cormark[1]

\ead{xianzhao@cityu.edu.hk}

\ead[url]{https://zhaoxyai.github.io/}

\credit{Conceptualization of this study, Methodology, Supervision}



\cortext[cor1]{Corresponding authors}

\fntext[fn1]{These authors contributed equally to this work.}


\begin{abstract}
High-dimensional tensors are ubiquitous in real-world applications but lead to massive storage overhead. Functional tensor decomposition addresses this by employing neural networks to map element coordinates to tensor cores, significantly reducing the parameters for data compression. However, existing functional methods typically flatten high-order tensors into 1D sequences. This folding operation causes highly overlapping local coordinate sequences and struggles to capture complex cross-mode spatial dependencies, severely limiting their expressiveness on diverse datasets. To tackle these challenges, we propose TensorSSM, a novel Implicit Neural Representation (INR) for functional decomposition method that memorizes spatial positions within a dynamic State Space Model (SSM). Specifically, TensorSSM first decomposes mode-wise coordinates into sequences split by K-ary rule, encoding spatial tokens from coarse to fine, which effectively captures spatial variations. Furthermore, we extend the SSM's state space to explicitly map cross-mode spatial dependencies, allowing inter-mode interactions to enhance the updates of each mode. By integrating cross-mode information from previous hidden states into the current state transitions, TensorSSM effectively expands tensor representation along two dimensions: the sequence dimension for coarse-to-fine spatial tokens, and the state dimension for cross-mode memorization. This design effectively integrates spatial information into the state space and the tensor core, substantially advancing the state adaptability and enhancing the expressiveness of the data compression. Extensive experiments demonstrate that TensorSSM achieves significantly higher reconstruction accuracy and storage efficiency, exhibiting robust expressiveness across diverse tensor datasets without relying on strict spatial assumptions. Our code is available online at \url{https://github.com/szhang-cityu/TensorSSM} for reproducibility.
\end{abstract}


\begin{keywords}
Implicit Neural Representations \sep State Space Model \sep  Functional Tensor Decomposition \sep Data Compression
\end{keywords}

\maketitle

\input{Content/1Introduction}
\input{Content/2Preliminaries}

\input{Content/3Method}
\input{Content/4Experiment}
\input{Content/5Conclusion}
\bibliographystyle{cas-model2-names}

\bibliography{cas-refs}

\input{Content/6Appendix}





\end{document}

%% file: Content/1Introduction.tex
\section{Introduction}
Tensors, defined as high-dimensional arrays, serve as a ubiquitous and fundamental mathematical structure for modeling complex, multi-modal data in the real world. A vast range of real-world datasets can be naturally formulated as tensors, encompassing diverse domains such as remote sensing images~\citep{huang2020nonlocal, shu2023dual, shu2026physics}, urban traffic networks~\citep{zhu2023difftraj}, financial stock markets~\citep{kwon2023tensorcodec, jang2021fast}, and user behavioral data~\citep{zhang2025glint, wang2025star, wang2025findrec}. However, as the dimensionality and resolution of modern data continue to scale, their corresponding tensors inherently contain a massive number of entries. This exponential growth in data size leads to storage overhead and computational burdens, preventing efficient deployment in resource-constrained environments~\citep{kwon2023tensorcodec, zhang2024dns}. Consequently, developing compact representations that can precisely express and reconstruct these high-dimensional tensors using fewer parameters has emerged as an increasingly important research problem.

\begin{figure*}[t]
    \centering
    \includegraphics[width=0.97\linewidth]{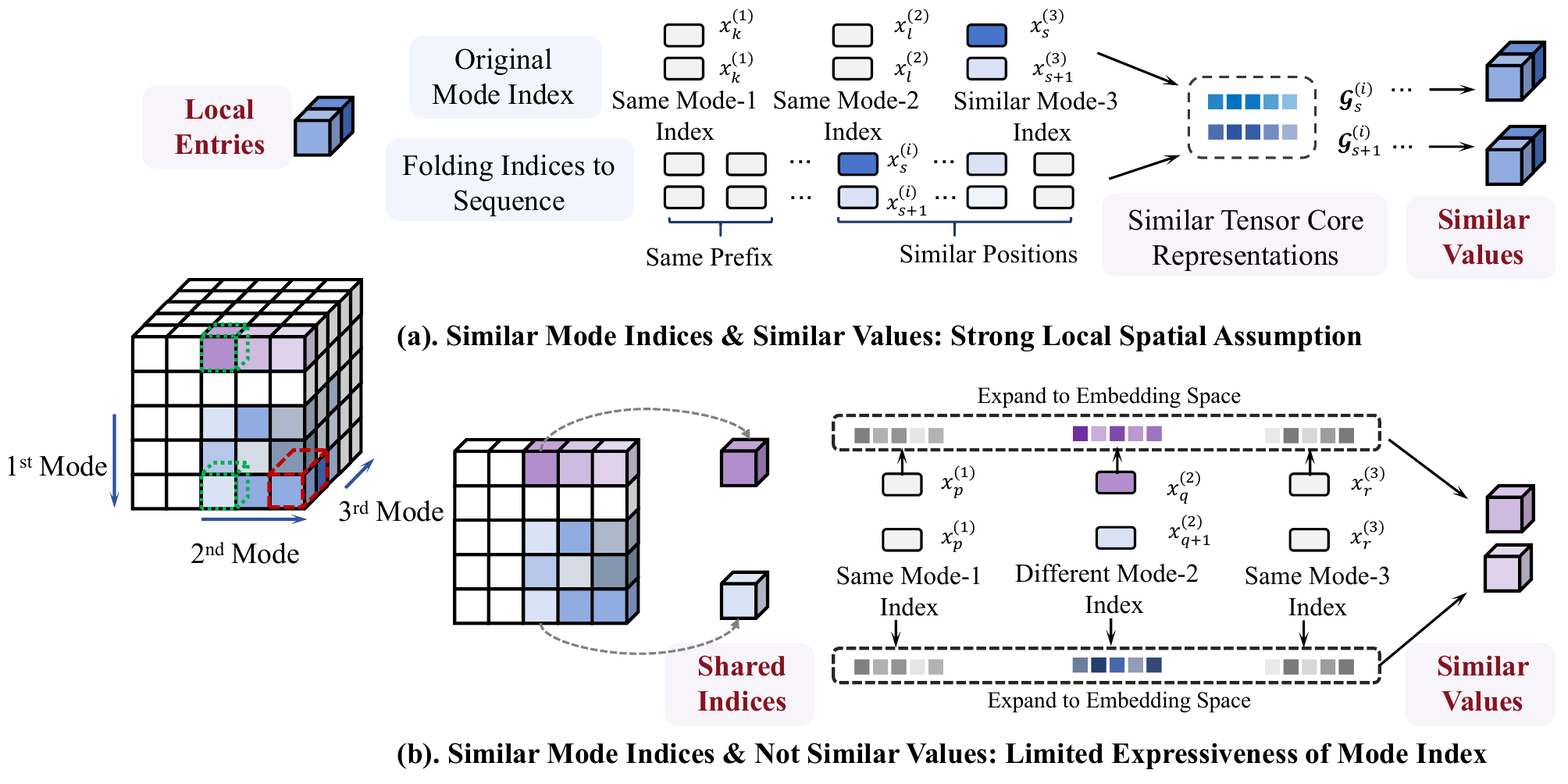}
    \vspace{-3mm}
    \caption{(a). Adjacent tensor entries usually have highly overlapping raw mode indices or sequences from folded mode indices, leading to the same or similar tensor core representations, which is only effective for local similar entries. (b). Entries with shared mode indices may have similar representations, but their true values may be totally different from each other.}
    \label{fig:challenges}
    \vspace{-5mm}
\end{figure*}

To tackle this challenge, conventional matrix and tensor decomposition techniques have been extensively developed to reconstruct the original high-dimensional data using a substantially reduced set of core elements~\citep{kwon2023tensorcodec, yin2021tt}. By factorizing the original tensor into lower-dimensional components, these methods achieve significant data compression and parameter efficiency~\citep{lebedev2014speeding}. Despite their widespread application, however, these traditional decomposition approaches often suffer from inherent limitations. For instance, certain methods are strictly confined to matrix-level factorization, which are not suitable for multi-dimensional data analysis~\citep{sun2007less}. Furthermore, many existing techniques heavily rely on strong structural assumptions, such as the sparsity of the data~\citep{allen2012sparse, li2017model, morup2008algorithms, li2020sgd} and the low-rank properties~\citep{malik2018low, zhao2019learning, gong2024enhanced}. This over-reliance on specific priors severely restricts their applicability and expressiveness when dealing with dense, complex, or irregular real-world tensors.

To mitigate the reconstruction errors inherent in high-order tensors and enhance the expressiveness for a broader range of tensor-based data, functional tensor decomposition methods~\citep{vemuri2024functional, zhou2026efficient, wang2025functional, zhang2026online} have been proposed to learn high-quality INR as tensor cores. Departing from conventional decomposition paradigms, these approaches leverage neural networks to capture the intrinsic relational patterns among tensor entries. Specifically, the multi-dimensional coordinate indices of the tensor elements are mapped into tensor cores via learned transformation rules~\citep{kwon2023tensorcodec}, from which the original numerical values are subsequently recovered through tensor decomposition operations. By implicitly representing the entire tensor solely through the neural network, this paradigm achieves exceptional parameter efficiency, requiring minimal storage for model weights while simultaneously maintaining remarkably low reconstruction errors.

Despite the great success of these functional approaches, they inherently struggle with several critical limitations when applied to complex tensors with high dimensionality in each mode. Specifically, to process these data, existing INR methods directly read the coordinates as the input sequence to learn the spatial correlation~\citep{zhang2026online, fang2024functional} or employ folding strategy~\citep{kwon2023tensorcodec} to decompose coordinate sequence to reduce the parameter size and refine the learning of spatial positions of tensor elements. 
However, this paradigm introduces a major bottleneck.
Existing functions learn tensor cores from raw index sequences, which \textbf{struggle to balance the \textit{reliance on strong assumption of high local spatial correlation} and the \textit{expressiveness of each index}}. 
Specifically, on the one hand, the folding operation inevitably generates highly overlapping coordinate sequences for local entries within the high-order tensor as shown in Figure~\ref{fig:challenges}.(a). Consequently, these methods heavily rely on the strong assumption of high local spatial correlation. To mitigate the disruption caused by arbitrary folding, techniques such as the reordering operation~\citep{kwon2023tensorcodec} are required to group correlated entries. Nevertheless, the effectiveness of such reordering heuristics is inherently limited and often fails to generalize across diverse and heterogeneous datasets. 
on the other hand, when the dimension of each mode is large, there are many entries that share the same indices in different modes as shown in Figure~\ref{fig:challenges}.(b), which limits the expressiveness of the index representations in the neural networks. In addition, storing the explosive spatial correlation in the limited hidden states makes it exceedingly difficult for the neural network to effectively capture and integrate them. 

To address these challenges, we propose TensorSSM, a novel Implicit Neural Representation (INR) that memorizes the spatial states of tensor elements into the state space of State Space Models (SSMs)~\citep{gu2021efficiently, gu2023mamba}. Specifically, instead of learning from raw index sequences, we first decompose the coordinate value of each tensor entry along every mode into a spatial token sequence based on a K-ary numeral system. This hierarchical decomposition encodes the spatial relationships and variations within each mode in a coarse-to-fine manner, where higher-order digits govern broader spatial regions and lower-order digits capture fine-grained local correlations. 
Furthermore, since standard sequential features can only capture intra-mode numerical relationships and suffer from limited expressiveness, we inject the cross-mode spatial dependencies into the state transition in the state space by spreading the spatial information in the state transition operator, yielding a spatial-aware SSM for tensors. Through these designs, TensorSSM shifts the learning paradigm from raw index sequences to expanded index sequences for state memorization, extending the tensor learning into two dimensions: (1) \textbf{the sequence dimension along the entry indices}, which controls sequence variations to learn coarse-to-fine spatial discrepancies; and (2) \textbf{the state dimension}, which compresses and memorizes cross-mode spatial information to govern numerical updates.

Our main contributions are summarized as follows:

\begin{itemize}[leftmargin=*]
\item We propose TensorSSM, a novel INR that seamlessly integrates spatial information learning for large tensors. By expanding indices into a coarse-to-fine spatial token via K-ary decomposition, our method effectively controls spatial discrepancies and overcomes the expressiveness limitations in raw index representations.
\item We introduce a state space extension that compresses and memorizes spatial information from other modes into the shared state, while simultaneously controlling hidden state updates with cross-mode information, which significantly enhances the model's expressiveness based on complex spatial correlations.
\item Extensive experiments on diverse real-world tensor data-sets demonstrate that TensorSSM achieves superior reconstruction accuracy and parameter efficiency compared to state-of-the-art baselines, validating its robustness and generalization capabilities across heterogeneous data.
\end{itemize}

%% file: Content/2Preliminaries.tex
\section{Preliminaries}
\label{sec:prelim}

\subsection{Functional Tensor Decomposition}
\label{subsec:tensor-function}

\noindent\textbf{Tensor Decomposition.}
Given a tensor $\mathcal{X} \in \mathbb{R}^{N_1 \times \cdots \times N_d}$, where each mode $k$ contains $N_k$ objects, and each entry of $\mathcal{X}$ is located by a $d$-size tuple of indexes $(i_1, \ldots, i_d)$, denoted as $\mathcal{X}(i_1, \cdots, i_d)$. Tensor decomposition approximates each entry by a compact interaction of low-rank latent factors. Specifically, we introduce $d$ groups of latent factors $\{\boldsymbol{U}_1, \ldots, \boldsymbol{U}_d\}$, where $\boldsymbol{U}_k = [\boldsymbol{u}^k_1, \ldots, \boldsymbol{u}^k_{N_k}]^\top \in \mathbb{R}^{N_k \times r_k}$, and each factor $\boldsymbol{u}^k_{i_k} \in \mathbb{R}^{r_k}$ represents the latent embedding of the $i_k$-th object in the $k$-th mode.

The classic Canonic-Polyadic (CP)  decomposition~\citep{hitchcock1927expression} sets all modes' ranks to a single value $r$, and approximates each entry as the product-sum of the factors: $ \mathcal{X}(i_1, \cdots, i_d) \approx \big(\boldsymbol{u}^1_{i_1} \circ \boldsymbol{u}^2_{i_2} \circ \cdots \circ \boldsymbol{u}^d_{i_d}\big)^\top \boldsymbol{1}$, where $\circ$ denotes the element-wise (Hadamard) product. Tucker decomposition~\citep{tucker1966some} takes one more step towards a more expressive and flexible low-rank structure. It allows us to set different factor ranks $\{r_1, \ldots, r_d\}$ for each mode, by introducing a core tensor $\mathcal{W} \in \mathbb{R}^{r_1 \times \cdots \times r_d}$, known as the Tucker core. It models the entry as a weighted sum over all possible cross-rank factor multiplications: $ \mathcal{X}(i_1, \cdots, i_d) \approx \mathrm{vec}(\mathcal{W})^\top \big(\boldsymbol{u}^1_{i_1} \otimes \boldsymbol{u}^2_{i_2} \otimes \cdots \otimes \boldsymbol{u}^d_{i_d}\big)$, where $\otimes$ is the Kronecker product and $\mathrm{vec}(\cdot)$ is the vectorization operator. Tensor-train (TT) decomposition~\citep{oseledets2011tensor} represents the tensor by $d$ third-order tensors $\mathcal{G}_1, \ldots, \mathcal{G}_d$, called TT cores, so that each entry is approximated as a chain of the core slices: $\mathcal{X}(i_1, \cdots, i_d) \approx \mathcal{G}_1^{(2)}(i_1)\mathcal{G}_2^{(2)}(i_2)\cdots\mathcal{G}_d^{(2)}(i_d), $
where, for all $k$, $\mathcal{G}_k \in \mathbb{R}^{r_{k-1} \times N_k \times r_k}$, $\mathcal{G}_k^{(2)}(i_k) \in \mathbb{R}^{r_{k-1} \times r_k}$ is the $i_k$-th slice of $\mathcal{G}_k$ along the second mode, and $r_0=r_d=1$. For simplicity, in this paper, we unify all other TT ranks (i.e., $r_1, \cdots, r_{d-1}$) to a single value denoted by $r$.

\noindent\textbf{Function factorization.}
Function factorization refers to decomposing a complex multivariate function into a set of low-dimensional functions~\citep{rai2014sparse,nouy2015low}. One of the most widely used methods for function factorization is TT~\citep{gorodetsky2015function,bigoni2016spectral, ballester2019sobol}, which converts the function approximation into a tensor decomposition problem. TT decomposition can factorize a $d$-variate function $f(x_1, \ldots, x_d)$ by:
\begin{equation}
  f(x_1, \cdots, x_d) \approx \mathcal{G}_1(x_1)\mathcal{G}_2(x_2)\cdots\mathcal{G}_d(x_d),
  \label{eq:func-tt}
\end{equation}
where each function $\mathcal{G}_k(\cdot): \mathbb{R} \to \mathbb{R}^{r_{k-1} \times r_k}$ is a univariate matrix-valued function, which takes the scalar $x_k$ as the input and outputs a matrix. Similarly, Tucker cores can also be used to generalize the discrete factors:
\begin{equation}
  f(x_1, \cdots, x_d) \approx \mathrm{vec}(\mathcal{W})^\top 
  \big(\boldsymbol{u}^1(x_1) \otimes \boldsymbol{u}^2(x_2) \otimes \cdots \otimes \boldsymbol{u}^d(x_d)\big),
  \label{eq:func-tucker}
\end{equation}
where each $\boldsymbol{u}^k(\cdot): \mathbb{R} \to \mathbb{R}^{r_k}$ maps the continuous index $x_k$ of mode $k$ to an $r_k$-dimensional latent factor, extending Tucker decomposition from grid-structured data to continuous-indexed data.

\subsection{Structured State Space Model (S4)}
\label{subsec:s4}

\noindent\textbf{State space model (SSM).}
A linear state space model maps a 1-D input signal $u(t) \in \mathbb{R}$ to an output 
$y(t) \in \mathbb{R}$ through an $N$-dimensional latent state $\boldsymbol{h}(t)$:
\begin{equation}
  \boldsymbol{h}'(t) = \boldsymbol{A}\boldsymbol{h}(t) + \boldsymbol{B}u(t),
  \qquad
  y(t) = \boldsymbol{C}\boldsymbol{h}(t),
  \label{eq:continuous-ssm}
\end{equation}
where $\boldsymbol{A} \in \mathbb{R}^{N \times N}$, $\boldsymbol{B} \in \mathbb{R}^{N \times 1}$, and $\boldsymbol{C} \in \mathbb{R}^{1 \times N}$. To operate on a discrete sequence $\boldsymbol{u} = (u_0, \ldots, u_{L-1})$, Eq.~\eqref{eq:continuous-ssm} is discretized by a step size $\Delta$ via the bilinear transform, yielding the recurrent inference process:
\begin{equation}
  \boldsymbol{h}_k = \bar{\boldsymbol{A}}\,\boldsymbol{h}_{k-1} + \bar{\boldsymbol{B}}\,u_k,
  \qquad
  y_k = \boldsymbol{C}\,\boldsymbol{h}_k,
  \label{eq:recurrent-ssm}
\end{equation}
where $\bar{\boldsymbol{A}} = (\boldsymbol{I} - \tfrac{\Delta}{2}\boldsymbol{A})^{-1}(\boldsymbol{I} + \tfrac{\Delta}{2}\boldsymbol{A})$ and $\bar{\boldsymbol{B}} = (\boldsymbol{I} - \tfrac{\Delta}{2}\boldsymbol{A})^{-1}\Delta\boldsymbol{B}$. Unrolling Eq.~\eqref{eq:recurrent-ssm} with $\boldsymbol{h}_{-1} = \boldsymbol{0}$ gives an equivalent global convolution
\begin{equation}
  \boldsymbol{y} = \bar{\boldsymbol{K}} * \boldsymbol{u},
  \qquad
  \bar{\boldsymbol{K}} = \big(\boldsymbol{C}\bar{\boldsymbol{B}},\; 
  \boldsymbol{C}\bar{\boldsymbol{A}}\bar{\boldsymbol{B}},\; \ldots,\; 
  \boldsymbol{C}\bar{\boldsymbol{A}}^{L-1}\bar{\boldsymbol{B}}\big),
  \label{eq:conv-ssm}
\end{equation}
where $\bar{\boldsymbol{K}}$ is the SSM convolution kernel. The model supports parallel training through~\eqref{eq:conv-ssm} and linear-time recurrent inference through~\eqref{eq:recurrent-ssm}. However, materializing $\bar{\boldsymbol{K}}$ naively costs $O(N^2 L)$ time and $O(NL)$ space, and a generic $\boldsymbol{A}$ fails to capture long-range dependencies.

\noindent\textbf{The S4 parameterization.}
S4~\citep{gu2021efficiently} addresses both problems by imposing structure on $\boldsymbol{A}$. First, $\boldsymbol{A}$ is initialized by the HiPPO matrix~\citep{gu2022train, gu2020hippo}, which makes $\boldsymbol{h}(t)$ optimally memorize the history of $u(t)$. Second, S4 exploits the key structural result that every HiPPO matrix admits a normal plus low-rank (NPLR) decomposition
\begin{equation}
  \boldsymbol{A} = \boldsymbol{V}\big(\boldsymbol{\Lambda} - \boldsymbol{P}\boldsymbol{Q}^*\big)\boldsymbol{V}^*,
  \label{eq:nplr}
\end{equation}
where $\boldsymbol{V}$ is unitary, $\boldsymbol{\Lambda}$ diagonal, and $\boldsymbol{P}, \boldsymbol{Q}$ are 
low-rank factors. 
Then, the kernel $\bar{\boldsymbol{K}}$ is computed stably in $\tilde{O}(N + L)$ operations and $O(N + L)$ space, while $(\boldsymbol{\Lambda}, \boldsymbol{P}, \boldsymbol{Q}, \boldsymbol{B}, \boldsymbol{C}, \Delta)$ are the learnable parameters.

%% file: Content/3Method.tex
\section{Method}

In this section, we present the proposed TensorSSM for learning a spatial-aware INR over arbitrary tensor indices. Given an index \(\boldsymbol{i} \in \mathbb{N}^M\) of tensor $\boldsymbol{\mathcal{X}}$, where \(M\) is the number of tensor modes, the model outputs a scalar prediction \(\hat{y}\). The method consists of three main components: (i) coarse-to-fine coordinate tokenization (Section~\ref{tokenization}), (ii) a spatial-aware TensorSSM with cross-stream fusion and input-dependent rank-one transitions (Section~\ref{TensorSSM}), and (iii) Tucker-based reconstruction with pairwise residuals (Section~\ref{sec:method-reconstruction}).

\begin{figure*}[t]
    \centering
    \includegraphics[width=\linewidth]{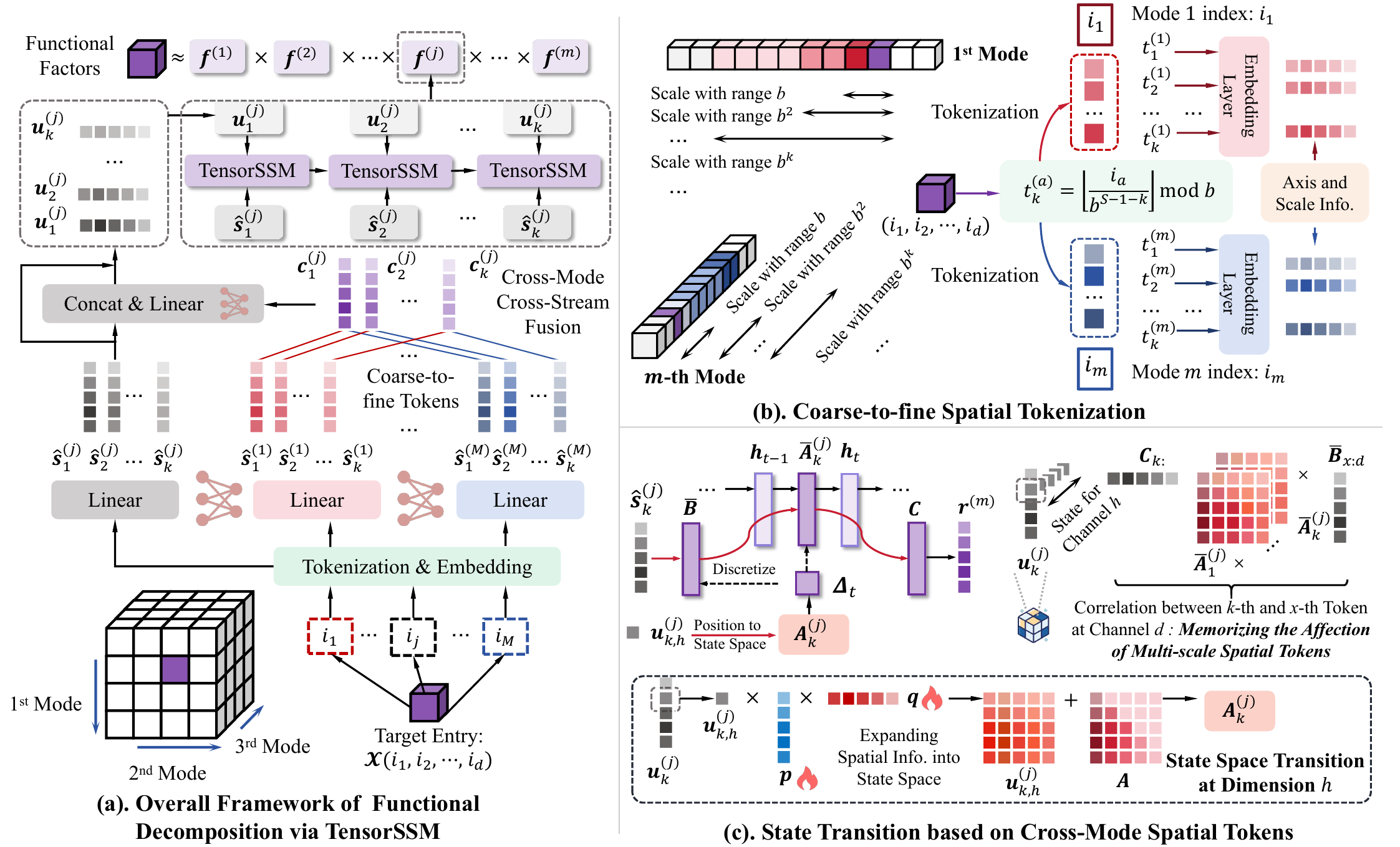}
    \vspace{-1mm}
    \caption{(a). Framework Overview of TensorSSM via Functional Tensor Decomposition. (b). The index of each mode is decomposed via base-$b$ rule and form a coarse-to-fine sequence as the spatial token. (c). The cross-mode information is spanned to the state transition operator, selectively influencing the spatial dependencies in the tensor core representations.}
    \label{fig:framework}
    \vspace{-5mm}
\end{figure*}

\subsection{Coarse-to-Fine Coordinate Tokenization}
\label{tokenization}
Existing INR methods leverage dense indices to generate tensor cores, limiting the expressiveness of the representations and rely heavily on the local spatial assumptions~\citep{kwon2023tensorcodec}. To handle tensors of arbitrary order and varying mode sizes, we avoid direct dense representations and instead decompose each multi-dimensional index $\boldsymbol{i} \in \mathbb{N}^M$ into a multi-scale sequence of tokens. 

Instead of representing a coordinate densely, we expand each coordinate in a base-\(b\) numeral system. We determine the number of scales $S$ required to represent the largest mode in a base-$b$ numeral system:
\begin{equation}
    S = \left\lceil \log_b \left( \max_{m} N_m \right) \right\rceil.
\end{equation}
Then, for each mode $m$, the coordinate $i_m$ of the index \(\boldsymbol{i} = (i_1, \dots, i_M)\) is expanded into $S$ base-$b$ digits, producing a coarse-to-fine sequence of spatial tokens:
\begin{equation}
    t_k^{(m)}
    =
    \left\lfloor
    \frac{i_{m}}{b^{S-1-k}}
    \right\rfloor
    \bmod b, \quad k \in \{0,\dots,S-1\}
\end{equation}
By the digit-wise variation structure of base-$b$ expansions, the $k$-th position of the token stores the spatial information in the range of $b^k$. Therefore, these tokens serve as media for storing multi-scale spatial information in dynamic states.

We project these discrete identifiers into a continuous latent space $\mathbb{R}^H$ via a disentangled additive composition of learnable embeddings to enhance the expressiveness of the model~\citep{wang2026embedding}. Formally, we define three embedding matrices: the token embedding $\boldsymbol{W}^{\mathrm{tok}} \in \mathbb{R}^{b \times H}$, the axis embedding $\boldsymbol{W}^{\mathrm{ax}} \in \mathbb{R}^{M \times H}$, and the scale embedding $\boldsymbol{W}^{\mathrm{sc}} \in \mathbb{R}^{S \times H}$. The embedding operation acts as a row-selection operator indexed by the integer identifiers. Specifically, for an integer index $j$ and matrix $\boldsymbol{W} \in \mathbb{R}^{J \times H}$, the notation $\boldsymbol{W}_j$ denotes the $j$-th row of $\boldsymbol{W}$, yielding a vector in $\mathbb{R}^H$. The embedded token sequence $\boldsymbol{s}_k^{(d)} \in \mathbb{R}^H$ is then constructed as:
\begin{equation}
    \boldsymbol{s}_k^{(m)} = \boldsymbol{W}^{\mathrm{tok}}_{t_k^{(m)}} + \boldsymbol{W}^{\mathrm{ax}}_{m} + \boldsymbol{W}^{\mathrm{sc}}_{k}.
\end{equation}

This embedding strategy preserves structural semantics, with each embedded token carrying the features of the corresponding tensor entry. These features can be expanded and used to memorize the state of cross-mode correlation patterns among tensor entries in our TensorSSM.

\subsection{TensorSSM: An INR with Memory}
\label{TensorSSM}
Memory has become an increasingly functions for both machine learning models~\citep{lin2026self} and large language model agents~\citep{zhang2026memsearch}. Existing INR methods via functional tensor decomposition employ hidden states in neural networks to learn the tensor core representations~\citep{vemuri2026f}. However, the vectors of hidden states show limited expressiveness, and struggle to memorize the spatial dependencies~\citep{gao2019representation}. In addition, basic SSMs are developed to learn sequential dependencies instead of spatial memory~\citep{gu2020hippo, sarrof2024expressive}. Uniquely, unlike a standard 1D SSM, our model explicitly spread the influence of multiple tensor modes into the state transition operator with information exchanged and memorized across modes through synchronized fusion and conditional modulation. 

\subsubsection{Cross-Mode Cross-Stream Fusion}

While each mode is tokenized independently, modeling an INR that can memorize spatial positions for the tensor requires interactions among different modes. To enable this without disturbing the coarse-to-fine causal order inside each stream, we apply a Synchronous Cross-Stream Fusion operation at the $k$-th position of the token. Let
\begin{equation}
    \widehat{\boldsymbol{s}}_k^{(m)}
    =
    \operatorname{LN}\!\left(\boldsymbol{s}_k^{(m)}\right)
\end{equation}
denote the token embedding of mode \(m\) at scale \(k\) after the layer normalization. The cross-mode context for mode \(m\) is aggregated from all other modes at the same scale using a message aggregation function $\operatorname{MSG}(\cdot)$:
\begin{equation}
    \begin{aligned}
    \bar{\boldsymbol{c}}_k^{(m)}
    & =
    \frac{1}{M-1}
    \sum_{m' \neq m}
    \operatorname{MSG}
    \left(
        \widehat{\boldsymbol{s}}_k^{(m)},
        \widehat{\boldsymbol{s}}_k^{(m')}
    \right)\\
    & = \frac{1}{M-1}
    \sum_{m' \neq m}
    \boldsymbol{W}_{o}
    \operatorname{SiLU}
    \left(
        \boldsymbol{W}_k^{(m)} \widehat{\boldsymbol{s}}_k^{(m)}
        +
        \boldsymbol{W}_k^{(m')} \widehat{\boldsymbol{s}}_k^{(m')}
    \right),
\end{aligned}
\end{equation}
where $\boldsymbol{W}_{o},  \boldsymbol{W}_k^{(m)}, \boldsymbol{W}_k^{(m')} \in \mathbb{R}^{H \times H}$ are learnable matrices and $H$ is the hidden size of the model. The fused token, which serves as the input to the SSM stream, is given by
\begin{equation}
    \boldsymbol{u}_k^{(m)}
    =
    \boldsymbol{s}_k^{(m)}
    +
         \boldsymbol{W}_{2}
         \operatorname{SiLU}
         \left(
        \boldsymbol{W}_{1}
        \left(
            [
                \widehat{\boldsymbol{s}}_k^{(m)}
                \,\Vert\,
                \bar{\boldsymbol{c}}_k^{(m)}
            ]
        \right)
        \right).
\end{equation}
If $M=1$, we will set $\bar{\boldsymbol{c}}_k^{(1)} = 0$ under this circumstance, as there is no cross-mode information. Crucially, \(\boldsymbol{u}_k^{(m)}\) is an explicit function of all mode embeddings aggregated at the same scale: $\boldsymbol{u}_k^{(m)} = \Phi_k \left( \boldsymbol{s}_k^{(1)}, \boldsymbol{s}_k^{(2)},\dots, \boldsymbol{s}_k^{(M)} \right)$, which is utilized to transfer the spatial information to the transition operator.
Because the fusion uses the original pre-fusion tokens \(\{\boldsymbol{s}_k^{(m)}\}_{m=1}^M\) synchronously, the operation is permutation equivariant with respect to mode indices. This allows the model to incorporate cross-mode spatial information in the multi-scale positional representation, which serves as the spatial correlation that can be memorized by the SSM state, enhancing the expressiveness of the INR.
\subsubsection{Spatial-Aware State Transition for SSM}

Existing methods including functional decomposition methods based on neural networks, struggle to capture the spatial patterns and numerical values simultaneously in the channels due to the limited expressiveness of the vector features. Beyond traditional S4~\citep{gu2021efficiently}, we utilize the state space transition with dynamic spatial information, enabling the SSM to compress and memorize the spatial patterns in the state space. 

Specifically, after cross-stream fusion, the fused token \(\boldsymbol{u}_k^{(m)}\) is fed into the SSM. For clarity, we write the equations for a single feature channel and omit the batch index. In the full model, the equations are applied independently over \(H\) channels. For simplicity, we denote the fused token at some channel as \(\boldsymbol{u}_{k}^{(m)}\). Let the continuous-time SSM be
\begin{equation}
    \frac{d \boldsymbol{x}^{(m)}(t)}{dt}
    =
    \boldsymbol{A}_k^{(m)} \boldsymbol{x}^{(m)}(t)
    +
    \boldsymbol{b}_{\mathrm{ssm}} \boldsymbol{u}_k^{(m)},
    \quad
    y_k^{(m)}
    =
    \boldsymbol{c}_{\mathrm{ssm}}^{\top}
    \boldsymbol{x}^{(m)}(t_k),
\end{equation}
where \(\boldsymbol{x}^{(m)}(t) \in \mathbb{R}^{N}\) is the latent state, \(\boldsymbol{b}_{\mathrm{ssm}} \in \mathbb{R}^{N}\) is the input projection vector, and \(\boldsymbol{c}_{\mathrm{ssm}} \in \mathbb{R}^{N}\) is the readout vector.

The transition matrix is initialized with the HiPPO-LegS~\citep{gu2020hippo} structure and modulated by the cross-mode spatial information tuning so that the correlation among multiple entries can be memorized:
\begin{equation}
    \label{transition}
    \boldsymbol{A}_k^{(m)}
    =
    \boldsymbol{A}
    +
    \alpha \, \boldsymbol{u}_k^{(m)} \, \boldsymbol{p} \boldsymbol{q}^{\top},
\end{equation}
where \(\boldsymbol{A} \in \mathbb{R}^{N \times N}\) is the HiPPO base matrix, \(\boldsymbol{p}, \boldsymbol{q} \in \mathbb{R}^{N}\) are learnable vectors, and \(\alpha > 0\) is a scaling factor. This design introduces input-dependent selectivity while preserving the stable memory structure of HiPPO. Specifically, we use learnable $\boldsymbol{p}$ and $\boldsymbol{q}$ to selectively span the spatial influence to the state update operator $\boldsymbol{A}_k^{(m)}$. Therefore, the multi-scale position information and patterns from other modes can be selectively compressed into the state, enhancing the expressiveness of learning tensor entries. The values are not only learned from indices, but also from the correlation pattern among all the indices of arbitrary entries memorized in the state space, helping the SSM to no longer rely on the strong assumption of high local spatial correlation. Then, following S4, we discretize the transition operator with the continuous spatial information using the bilinear transform with step size \(\Delta t\):
\begin{equation}
    \bar{\boldsymbol{A}}_k^{(m)}
    =
    \left(
        \boldsymbol{I}
        -
        \frac{\Delta t}{2}
        \boldsymbol{A}_k^{(m)}
    \right)^{-1}
    \left(
        \boldsymbol{I}
        +
        \frac{\Delta t}{2}
        \boldsymbol{A}_k^{(m)}
    \right),
\end{equation}
\begin{equation}
    \bar{\boldsymbol{b}}_k^{(m)}
    =
    \left(
        \boldsymbol{I}
        -
        \frac{\Delta t}{2}
        \boldsymbol{A}_k^{(m)}
    \right)^{-1}
    \Delta t \,
    \boldsymbol{b}_{\mathrm{ssm}}.
\end{equation}
The discrete-time recurrence is then
\begin{equation}
    \boldsymbol{x}_k^{(m)}
    =
    \bar{\boldsymbol{A}}_k^{(m)}
    \boldsymbol{x}_{k-1}^{(m)}
    +
    \bar{\boldsymbol{b}}_k^{(m)}
    \boldsymbol{u}_k^{(m)},
    \qquad
    y_k^{(m)}
    =
    \boldsymbol{c}_{\mathrm{ssm}}^{\top}
    \boldsymbol{x}_k^{(m)}.
\end{equation}

Directly forming and inverting $\boldsymbol{I}-\frac{\Delta t}{2}\boldsymbol{A}_k^{(m)}$ at every step would be computationally expensive. However, since the input dependence is rank-one, we can derive an exact and efficient update using the Woodbury identity. Define
\begin{equation}
    \boldsymbol{L}_0
    =
    \boldsymbol{I}
    -
    \frac{\Delta t}{2}
    \boldsymbol{A},
    \quad
    \boldsymbol{R}_0
    =
    \boldsymbol{I}
    +
    \frac{\Delta t}{2}
    \boldsymbol{A},
    \quad
    \boldsymbol{\gamma}_k^{(m)}
    =
    \frac{\Delta t}{2}
    \alpha \,
    \boldsymbol{u}_k^{(m)}.
\end{equation}
Then using the Eq.~\eqref{transition} we have:
\begin{equation}
    \boldsymbol{L}_k^{(m)}
    :=
    \boldsymbol{I}
    -
    \frac{\Delta t}{2}
    \boldsymbol{A}_k^{(m)}
    =
    \boldsymbol{L}_0
    -
    \boldsymbol{\gamma}_k^{(m)}
    \boldsymbol{p} \boldsymbol{q}^{\top}.
\end{equation}
The state update can be written as solving
\begin{equation}
    \boldsymbol{L}_k^{(m)}
    \boldsymbol{x}_k^{(m)}
    =
    \boldsymbol{v}_k^{(m)},
\end{equation}
where
\begin{equation}
    \boldsymbol{v}_k^{(m)}
    =
    \boldsymbol{R}_0
    \boldsymbol{x}_{k-1}^{(m)}
    +
    \boldsymbol{\gamma}_k^{(m)}
    \boldsymbol{p}
    \boldsymbol{q}^{\top}
    \boldsymbol{x}_{k-1}^{(m)}
    +
    \Delta t \,
    \boldsymbol{b}_{\mathrm{ssm}}
    \boldsymbol{u}_k^{(m)}.
\end{equation}

By the Woodbury matrix identity,
\begin{equation}
    \left(
        \boldsymbol{L}_k^{(m)}
    \right)^{-1}
    =
    \boldsymbol{L}_0^{-1}
    +
    \boldsymbol{\gamma}_k^{(m)}
    \boldsymbol{L}_0^{-1}
    \boldsymbol{p}
    \left(
        1
        -
        \boldsymbol{\gamma}_k^{(m)}
        \boldsymbol{q}^{\top}
        \boldsymbol{L}_0^{-1}
        \boldsymbol{p}
    \right)^{-1}
    \boldsymbol{q}^{\top}
    \boldsymbol{L}_0^{-1}.
\end{equation}
Therefore,
\begin{equation}
    \boldsymbol{x}_k^{(m)}
    =
    \boldsymbol{L}_0^{-1}
    \boldsymbol{v}_k^{(m)}
    +
    \frac{
        \boldsymbol{\gamma}_k^{(m)}
        \boldsymbol{q}^{\top}
        \boldsymbol{L}_0^{-1}
        \boldsymbol{v}_k^{(m)}
    }{
        1
        -
        \boldsymbol{\gamma}_k^{(m)}
        \boldsymbol{q}^{\top}
        \boldsymbol{L}_0^{-1}
        \boldsymbol{p}
    }
    \boldsymbol{L}_0^{-1}
    \boldsymbol{p}.
\end{equation}

This formulation is algebraically equivalent to solving the bilinearly discretized dynamic system that memorizes the spatial information using the spatial-aware state transition matrix $\boldsymbol{A}_k^{(m)}$. It avoids calculating matrix inversion at every step. Once $\boldsymbol{L}_0^{-1}$, $\boldsymbol{L}_0^{-1}\boldsymbol{p}$, $\boldsymbol{q}^{\top}\boldsymbol{L}_0^{-1}$, and the scalar $\boldsymbol{q}^{\top}\boldsymbol{L}_0^{-1}\boldsymbol{p}$ are precomputed, each step only requires matrix-vector products and a scalar correction. Moreover, when \(\boldsymbol{u}_k^{(m)} \approx 0\), the transition reduces to the stable HiPPO-LegS SSM, while in other cases the relative spatial positions from other modes can be spread, compressed and memorized in the state space through the rank-one term.

\subsubsection{Cross-Mode Modulation}

Existing methods tend to ignore the correlation among the indices after learning a tensor core representation, which limits the quality of the tensor core generation. Our TensorSSM learns the representation \(\boldsymbol{y}_{S-1}^{(m)} \in \mathbb{R}^{H}\) for each mode index $i_m$, so beyond capturing the \textit{multi-scale spatial correlation in the state space}, the tensor core may also \textit{have correlation in the space of hidden states}. To further strengthen cross-mode interactions, we apply a cross-mode Feature-wise Linear Modulation (FiLM). For mode \(m\), we concatenate the representations of all other modes:
\begin{equation}
    \boldsymbol{c}^{(m)}
    =
    \bigoplus_{m' \neq m}
    \boldsymbol{y}_{S-1}^{(m')}
    \in
    \mathbb{R}^{H(M-1)}.
\end{equation}
This concatenated information from other modes serves as a general modulation on the generated representation vector $\boldsymbol{y}_{S-1}^{(m)}$ through a gate mechanism. The modulation factors $\boldsymbol{\xi}_m, \boldsymbol{\beta}_m \in \mathbb{R}^{H}$ can be derived by:
\begin{equation}
    \begin{bmatrix}
        \boldsymbol{\xi}_m \\
        \boldsymbol{\beta}_m
    \end{bmatrix}
    =
    \boldsymbol{W}_{c,2}^{(m)}
    \operatorname{SiLU}
    \left(
        \boldsymbol{W}_{c,1}^{(m)}
        \boldsymbol{c}^{(m)}
        +
        \boldsymbol{b}_{1}^{(m)}
    \right)
    +
    \boldsymbol{b}_{2}^{(m)},
\end{equation}
where $\boldsymbol{W}_{c,1}^{(m)} \in \mathbb{R}^{H \times H(M-1)}, \boldsymbol{W}_{c,2}^{(m)} \in \mathbb{R}^{2H \times H}$ are learnable matrices, $\boldsymbol{b}_{1}^{(m)}, \boldsymbol{b}_{2}^{(m)}$ are biases, and $\mathrm{SiLU}(\cdot)$ is the activation function. The modulated representation is
\begin{equation}
    \boldsymbol{r}^{(m)}
    =
    \operatorname{sigmoid}(\boldsymbol{\xi}_m)
    \odot
    \boldsymbol{y}_{S-1}^{(m)}
    +
    \boldsymbol{\beta}_m.
\end{equation}
This allows the representation of one mode to be conditionally rescaled by the global context of all other modes.

\subsection{Tucker Reconstruction and Residual Heads}
\label{sec:method-reconstruction}

Given the final INR of the $m$-th mode \(\boldsymbol{r}^{(m)} \in \mathbb{R}^{H}\), in this paper, we reconstruct the scalar tensor value using Tucker decomposition. It is noteworthy that tensor decomposition methods are flexible and the Tucker reconstruction head can also be replaced by CP, TT or MLP-based methods. Details of the relevant experiments are given in Section~\ref{reconstruction}.

For each mode \(m\), we project the representation into a low-dimensional factor:
\begin{equation}
    \boldsymbol{f}^{(m)}
    =
    \boldsymbol{W}_{\mathrm{tuck}}^{(m)}
    \boldsymbol{r}^{(m)}
    \in
    \mathbb{R}^{R_m},
\end{equation}
where \(\boldsymbol{W}_{\mathrm{tuck}}^{(m)} \in \mathbb{R}^{R_m \times H}\). Let $\boldsymbol{\mathcal{G}} \in \mathbb{R}^{R_1 \times R_2 \times \cdots \times R_M} $ be the Tucker core tensor. The Tucker prediction is
\begin{equation}
    z
    =
    \left\langle
        \boldsymbol{\mathcal{G}},
        \boldsymbol{f}^{(1)}
        \otimes
        \boldsymbol{f}^{(2)}
        \otimes
        \cdots
        \otimes
        \boldsymbol{f}^{(M)}
    \right\rangle.
\end{equation}
This head captures global high-order interactions among all tensor modes. Since the correlation of values and mode indices are highly complex, using the spatial-aware state space transition alone may suffer from the memory loss in the state space~\citep{wang2025memmamba}. Therefore, to further improve expressiveness, we add an optional pairwise residual head to perform a fine-grained refinement on the output using the cross-mode outputs $\boldsymbol{r}_p$ and $\boldsymbol{r}_q$, which deeply refines and recalls the cross-mode spatial information to enhance the final outputs. 
\begin{equation}
    \Delta z
    =
    \sum_{p < q}
    \left\langle
        \boldsymbol{W}_{L}^{pq}
        \boldsymbol{r}^{(p)},
        \boldsymbol{W}_{R}^{pq}
        \boldsymbol{r}^{(q)}
    \right\rangle,
\end{equation}
where \(\boldsymbol{W}_{L}^{pq}, \boldsymbol{W}_{R}^{pq} \in \mathbb{R}^{2\times H}\) are learnable projection matrices. Then the final prediction $\hat{y}$ is
\begin{equation}
    \hat{y}
    =
    \eta z
    +
    \kappa_0
    +
    \lambda \Delta z,
\end{equation}
where \(\eta\) and \(\kappa_0\) are scalar learnable parameters, and \(\lambda\) controls the strength of the pairwise residual.

\subsection{Training Objective}

The model is trained by minimizing the mean squared error over tensor entries:
\begin{equation}
    \mathcal{L}
    =
    \frac{1}{N_{\mathrm{total}}}
    \sum_{n=1}^{N_{\mathrm{total}}}
    \left(
        \hat{y}_n
        -
        y_n
    \right)^2,
\end{equation}
where \(y_n\) is the ground-truth value of the \(n\)-th tensor entry.

To prevent the model from relying on a fixed coordinate traversal order, we use permutation-based training. Following the previous work~\citep{kwon2023tensorcodec}, at each epoch, a random permutation over all tensor entries is sampled, and mini-batches are drawn from the permuted index sequence. This encourages the learned INR to depend only on the coordinate values and mode structure, not on the ordering in which entries are visited. During validation, the model is evaluated densely over the full tensor grid to obtain unbiased full-tensor MSE and fitness metrics.

%% file: Content/4Experiment.tex

\section{Experiments}
\label{sec:experiments}

\subsection{Datasets and Evaluation Metrics}
\label{sec:datasets_metrics}

We evaluate TensorSSM on six real-world tensors including urban mobility, human activity, environmental monitoring, and transportation. \textbf{Uber} is a $183 \times 24 \times 1{,}140$ tensor of pickup counts organized by date, hour, and latitude. \textbf{Action} is a $100 \times 570 \times 567$ tensor of frame-level features extracted from action videos. \textbf{Air Quality} is a $5{,}600 \times 362 \times 6$ tensor of pollutant measurements organized by hour, monitoring location, and pollutant type. \textbf{PEMS-SF} is a $963 \times 144 \times 440$ tensor of lane-usage rates organized by station, timestamp, and day. \textbf{Activity} is a $337 \times 570 \times 320$ tensor of frame-level features extracted from human-activity videos. \textbf{NYC} is a $265 \times 265 \times 28 \times 35$ tensor of taxi-trip counts organized by origin, destination, day, and month.

\begin{table}[t]
  \centering
  \caption{Statistics of the datasets used in our experiments.}
  \label{tab:dataset_statistics}

  \normalsize
  \setlength{\tabcolsep}{2pt}
  \renewcommand{\arraystretch}{1.12}

  \begin{tabular*}{\columnwidth}{
    @{\extracolsep{\fill}}
    lcccc
    @{}
  }
    \toprule
    \textbf{Dataset}
    & \textbf{Size}
    & \textbf{Order}
    & \textbf{Density}
    & \textbf{Smoothness} \\
    \midrule

    Uber
    & \(183{\times}24{\times}1{,}140\)
    & 3
    & 0.138
    & 0.861 \\

    Action
    & \(100{\times}570{\times}567\)
    & 3
    & 0.393
    & 0.484 \\

    Air
    & \(5{,}600{\times}362{\times}6\)
    & 3
    & 0.917
    & 0.513 \\

    PEMS-SF
    & \(963{\times}144{\times}440\)
    & 3
    & 0.999
    & 0.461 \\

    Activity
    & \(337{\times}570{\times}320\)
    & 3
    & 0.569
    & 0.553 \\

    NYC
    & \(265{\times}265{\times}28{\times}35\)
    & 4
    & 0.118
    & 0.788 \\

    \bottomrule
  \end{tabular*}
  \vspace{-3mm}
\end{table}

We assess compression performance using compressed size and reconstruction Fitness. Given an input tensor $\mathcal{X}$ and its reconstruction $\hat{\mathcal{X}}$, Fitness is defined as
\begin{equation}
    \operatorname{Fitness}
    = 1 - \frac{\lVert \mathcal{X} - \hat{\mathcal{X}} \rVert_F}
    {\lVert \mathcal{X} \rVert_F},
    \label{eq:fitness}
\end{equation}
where a higher value indicates a more accurate reconstruction and a value of one indicates exact reconstruction. Compressed size is reported in bytes, with lower values indicating more compact storage, and includes the stored representation and the metadata required for reconstruction. All metrics are computed over the complete reconstructed tensor; thus, this evaluation measures full-tensor reconstruction rather than generalization to held-out coordinates. For TensorSSM and the size-matched baselines, we select the available configuration whose compressed size is closest to that of TensorCodec and compare reconstruction Fitness. For SZ3, we instead select the operating point whose Fitness is closest to TensorCodec and compare their compressed sizes. 

\subsection{Baselines}
\label{sec:baselines}

We compare TensorSSM with six representative baselines covering classical
tensor decomposition, error-bounded scientific data compression, and neural
tensor compression methods.
\textbf{(1) CPD.}
Canonical Polyadic Decomposition approximates a tensor as a sum of rank-one
components.
\textbf{(2) TTD.}
Tensor-Train Decomposition represents a high-order tensor as a sequence of
compact low-rank cores.
\textbf{(3) TRD.}
Tensor-Ring Decomposition connects the decomposition cores in a cyclic
structure to provide a flexible low-rank representation.
\textbf{(4) SZ3.}
SZ3 is an error-bounded scientific data compressor for which we select an
operating point with Fitness closest to TensorCodec and compare their
compressed sizes.
\textbf{(5) NeuKron.}
NeuKron is a neural Kronecker-based compression method that recursively models
tensor entries using compact learned representations.
\textbf{(6) TensorCodec.}
TensorCodec is a neural tensor-train compression method that combines
mixed-radix folding, index reordering, and recurrent generation to reconstruct
tensor entries.

\subsection{Implementation Details}
\label{sec:implementation_details}

We implement TensorSSM in PyTorch and conduct all experiments on an NVIDIA GeForce RTX 5090 GPU. Model parameters and forward computation use FP32, while full-tensor squared errors are accumulated in FP64 for stable metric reporting. All TensorSSM configurations use Tucker rank $r=6$, two recurrent blocks, an expansion ratio of 2, decimal coordinate tokenization with base $b=10$, cross-mode FiLM, and a pairwise interaction head of rank 2. We set the dimension of hidden size and state $(H,N)=(24,5)$ for Uber, Action, and Air Quality, $(H,N)=(11,10)$ for PEMS-SF and Activity, and $(H,N)=(10,8)$ for NYC. We optimize all TensorSSM models using Adam with a learning rate of $2 \times 10^{-3}$, a batch size of 65,536, and a maximum of 1,000 epochs. At each epoch, all tensor entries are randomly permuted and visited once, after which we evaluate the complete tensor using batches of 131,072 entries and retain the checkpoint with the highest Fitness. Training stops when Fitness fails to improve by at least $10^{-4}$ for 30 consecutive epochs. All experiments use random seed 7. For the baselines, we control capacity using rank for CPD and TTD, a truncation threshold for TRD, an absolute error bound for SZ3, hidden width for NeuKron, and hidden size for TensorCodec. Since CPD, TTD, and TRD do not provide a direct mechanism for targeting an exact compressed size, we sweep their rank or truncation settings and report, for each dataset, the configuration whose compressed size is closest to that of TensorCodec. TensorSSM and NeuKron are selected analogously using their available hidden-size or hidden-width configurations. SZ3 is tuned separately to match the Fitness of TensorCodec, after which the two methods are compared by compressed size.

\subsection{Main Results}
\begin{table*}[t]
  \centering
  \caption{Comparison of compressed size and Fitness on six datasets, where CS denotes compressed size. For CPD, TTD, TRD, NeuKron, and TensorSSM, we report the available configuration whose compressed size is closest to that of TensorCodec. For SZ3, we report the operating point whose Fitness is closest to that of TensorCodec. The best results are in bold and the second best are underlined. Improve denotes the relative Fitness gain of TensorSSM over the strongest competing method on each dataset. ``$\ast$'' indicates that the improvements over baselines are statistically significant.}
  \label{tab:tensorssm_results}

  \renewcommand{\arraystretch}{1.1}

  \begin{tabular}{@{}llcccccccc@{}}
    \toprule
    Dataset
    & Metric
    & CPD
    & TTD
    & NeuKron
    & TRD
    & SZ3
    & TensorCodec
    & TensorSSM
    & Improve (\%) \\
    \midrule

    \multirow{2}{*}{Uber}
    & CS (bytes)
    & 156,368
    & \underline{153,468}
    & 155,154
    & 157,824
    & 646,537
    & 153,766
    & \textbf{152,720}
    & \multirow{2}{*}{2.55} \\
    & Fitness
    & 0.7289
    & 0.7355
    & 0.6925
    & 0.6409
    & \underline{0.7382}
    & 0.7380
    & \textbf{0.7570}$^{\ast}$
    & \\
    \midrule

    \multirow{2}{*}{Action}
    & CS (bytes)
    & 153,512
    & 167,264
    & \underline{153,388}
    & 153,760
    & 4,980,510
    & 155,558
    & \textbf{152,624}
    & \multirow{2}{*}{12.36} \\
    & Fitness
    & \underline{0.7725}
    & 0.6767
    & 0.3767
    & 0.7716
    & 0.7493
    & 0.7542
    & \textbf{0.8680}$^{\ast}$
    & \\
    \midrule

    \multirow{2}{*}{Air}
    & CS (bytes)
    & \textbf{143,256}
    & \underline{148,320}
    & 149,891
    & 158,416
    & 1,592,014
    & 160,999
    & 152,720
    & \multirow{2}{*}{7.34} \\
    & Fitness
    & 0.6775
    & \underline{0.6794}
    & 0.6580
    & 0.5880
    & 0.6761
    & 0.6762
    & \textbf{0.7293}$^{\ast}$
    & \\
    \midrule

    \multirow{2}{*}{PEMS-SF}
    & CS (bytes)
    & \underline{43,344}
    & \textbf{42,460}
    & 46,175
    & 45,472
    & 1,906,220
    & 46,323
    & 45,648
    & \multirow{2}{*}{0.40} \\
    & Fitness
    & \underline{0.6027}
    & 0.5898
    & 0.5547
    & 0.5616
    & 0.5991
    & 0.5991
    & \textbf{0.6051}$^{\ast}$
    & \\
    \midrule

    \multirow{2}{*}{Activity}
    & CS (bytes)
    & 49,120
    & \underline{46,992}
    & 48,201
    & 50,140
    & 2,125,643
    & 48,365
    & \textbf{45,648}
    & \multirow{2}{*}{10.77} \\
    & Fitness
    & 0.6256
    & 0.5488
    & 0.4291
    & \underline{0.6470}
    & 0.6310
    & 0.6310
    & \textbf{0.7167}$^{\ast}$
    & \\
    \midrule

    \multirow{2}{*}{NYC}
    & CS (bytes)
    & 47,520
    & 49,392
    & \textbf{47,071}
    & 49,536
    & 530,298
    & 48,067
    & \underline{47,408}
    & \multirow{2}{*}{8.05} \\
    & Fitness
    & 0.7488
    & 0.6933
    & 0.7086
    & 0.7010
    & \underline{0.7578}
    & 0.7568
    & \textbf{0.8188}$^{\ast}$
    & \\

    \bottomrule
  \end{tabular}
  \vspace{-5mm}
\end{table*}

We evaluate the reconstruction performance and
compression efficiency of TensorSSM against six representative baselines across six datasets. Specifically, we compare their reconstruction Fitness and compressed size, with the results reported in
Table~\ref{tab:tensorssm_results}. TensorSSM achieves the highest
Fitness on all six datasets, with an average relative improvement of $6.91\%$ over the strongest competing method on each dataset.
The largest gains are observed on Action ($12.36\%$) and Activity
($10.77\%$), followed by NYC ($8.05\%$) and Air Quality ($7.34\%$).
Smaller improvements are observed on Uber ($2.55\%$) and PEMS-SF
($0.40\%$), indicating that the magnitude of the gains varies
across datasets. Nevertheless, TensorSSM consistently achieves
the highest reconstruction Fitness across all evaluated datasets.

Classical tensor decomposition methods exhibit substantial performance variation across datasets. CPD achieves a Fitness of $0.7725$ on Action, while TTD performs best among classical methods on Air Quality dataset ($0.6794$), and TRD leads on Activity ($0.6470$). These differences reflect the sensitivity of low-rank decomposition methods to the underlying tensor structure. Although each factorization can efficiently represent certain patterns, its prescribed decomposition form constrains the interactions that can be captured under a limited parameter budget. In contrast, TensorSSM generates coordinate-dependent Tucker factors through dynamic state-space modeling, with additional cross-mode interactions to enrich the learned representation. This flexibility allows TensorSSM to achieve higher Fitness than all three classical methods across the evaluated datasets.

Among neural compression baselines, NeuKron shows relatively low Fitness on Action ($0.3767$) and Activity ($0.4291$). Its Kronecker-based representation, designed primarily for sparse and reorderable tensors, may be less suitable for the structures present in these datasets. TensorCodec provides stronger performance by using an LSTM to generate tensor-train cores from folded and reordered indices. However, TensorSSM improves its Fitness from $0.7542$ to $0.8680$ on Action and from $0.7568$ to $0.8188$ on NYC. Unlike TensorCodec, TensorSSM uses numeral-base coordinate encoding and dynamic state-space modeling, which is useful for memorizing spatial information and constructing coordinate-dependent representations. These differences provide a plausible explanation for the observed improvements, while their individual contributions are examined in the ablation study.

The improvements in reconstruction quality do not require larger compressed representations. TensorSSM achieves the smallest compressed size on Uber, Action, and Activity, and the second-smallest on NYC. On Air Quality and PEMS-SF, its compressed size remains close to those of the size-matched baselines while achieving higher Fitness. Compared with SZ3, TensorSSM also demonstrates favorable storage efficiency at the reported operating points. For example, on NYC, TensorSSM achieves a Fitness of $0.8188$ using $47{,}408$ bytes, whereas SZ3 achieves $0.7578$ using $530{,}298$ bytes. 

Overall, the results demonstrate that TensorSSM provides consistently strong reconstruction quality within a compact storage budget. Its coordinate-dependent core generation offers a flexible alternative to fixed low-rank factorizations and existing neural compression architectures. The contributions of its individual components are further investigated through ablation experiments.

\subsection{Convergence Analysis}
\label{sec:convergence_analysis}

\begin{figure*}
    \centering
    \includegraphics[width=0.9\textwidth]{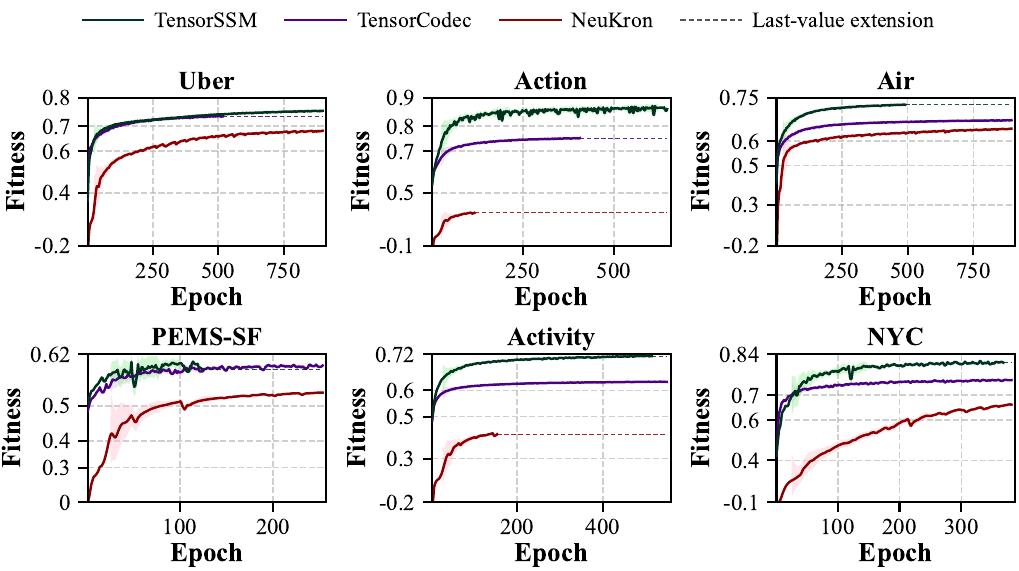}
    \caption{Training Fitness curves of TensorSSM, TensorCodec, and NeuKron on six datasets. For Uber, Air Quality, PEMS-SF, and NYC, the NeuKron curves are displayed up to the larger final epoch of TensorSSM and TensorCodec.}
    \label{fig:training_fitness_curves}
    \vspace{-3mm}
\end{figure*}

In this subsection, we investigate the convergence behavior of
TensorSSM by comparing its reconstruction Fitness with TensorCodec and NeuKron across six datasets. Figure~\ref{fig:training_fitness_curves}
presents the Fitness trajectories over training epochs. Dashed lines indicate last-value extensions for methods with shorter training runs and do not represent additional training results. Overall, TensorSSM achieves rapid improvements in the early training stages and maintains higher Fitness throughout most of the observed training process.

The convergence patterns vary across datasets. On Action and
Activity, TensorSSM establishes a clear advantage over both baselines within the early epochs and maintains this gap as training progresses. NeuKron converges to substantially lower Fitness on these datasets, while TensorCodec exhibits a smaller but persistent gap relative to TensorSSM. A similar pattern is observed on NYC, where TensorSSM reaches a higher Fitness level and remains ahead of the competing methods. 
On PEMS-SF, TensorSSM and TensorCodec exhibit comparable trajectories with some fluctuations, consistent with their smaller performance
difference.

These observations indicate that the reconstruction advantage of TensorSSM emerges early in training rather than relying solely on late-stage optimization. Its coarse-to-fine coordinate representation and cross-mode-enhanced state-space modeling may contribute to learning effective representations during training. Overall, TensorSSM demonstrates favorable convergence behavior in terms of training epochs across the evaluated datasets.

\subsection{Sensitivity to the Numeral Base}

In this subsection, we conduct a sensitivity analysis to investigate the effect of the numeral base on the reconstruction performance of TensorSSM. Specifically, a smaller base produces a longer coarse-to-fine sequence, whereas a larger base shortens the sequence but increases the token vocabulary size. The numeral base therefore controls the trade-off between sequence length and coordinate granularity. The results are presented in Figure~\ref{fig:numeral_base_sensitivity}.

\begin{figure}
    \centering
    \includegraphics[width=\linewidth]
    {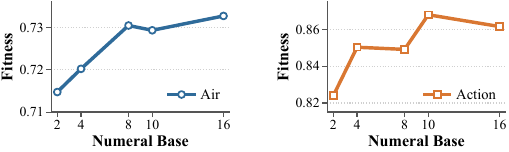}
    \caption{Effect of the numeral base on the Fitness of TensorSSM on the Air Quality and Action datasets.}
    \label{fig:numeral_base_sensitivity}
    \vspace{-3mm}
\end{figure}

Action shows a clear preference for decimal tokenization. Fitness increases from $0.8240$ at $b=2$ to $0.8680$ at $b=10$, and then decreases to $0.8616$ at $b=16$. The decimal setting improves Fitness by $0.0440$ over $b=2$ and by $0.0064$ over the next-best setting, $b=16$. This result indicates that $b=10$ provides a suitable balance for Action and supports its use as the default TensorSSM setting.

Air Quality dataset is less sensitive to the numeral base. Among the controlled settings, $b=8$, $b=10$, and $b=16$ achieve $0.7304$, $0.7293$, and $0.7327$, respectively. The best observed result at $b=16$ is only $0.0034$ above the decimal setting. This difference may be related to the imbalanced shape of the dataset, which is $(5600,362,6)$, since the dimension of size $6$ uses only a small subset of the available digits under larger bases. Thus, $b=10$ remains a strong default, although the optimal numeral base may vary with the tensor shape.

\subsection{Compatibility with Reconstruction Heads}
\label{reconstruction}
\begin{figure}
    \centering
    \includegraphics[width=\columnwidth]
    {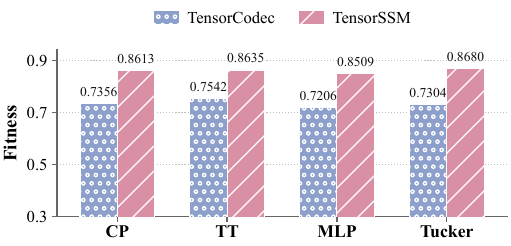}
    \caption{Fitness of TensorSSM and TensorCodec with different reconstruction heads on the Action dataset. 
    }
    \label{fig:tucker_replacement_ablation}
    \vspace{-4mm}
\end{figure}

In this subsection, we conduct a reconstruction-head replacement study to investigate whether the effectiveness of TensorSSM depends on the original Tucker reconstruction head. Specifically, we replace the Tucker head with CP, TT, and MLP alternatives. We also evaluate TensorCodec with the same head families, retaining its original TT head as the reference. Within each framework, the coordinate tokenizer, encoder, remaining modules, and training settings are kept unchanged, and each variant is trained end-to-end with seed $7$. The heads operate on mode-level representations in TensorSSM and mixed-radix-level representations in TensorCodec. The results are presented in Figure~\ref{fig:tucker_replacement_ablation}. The compressed sizes range from $154{,}110$ to $154{,}170$ bytes for TensorSSM and from $155{,}418$ to $155{,}566$ bytes for TensorCodec, providing approximately size-matched comparisons.

TensorSSM maintains a Fitness above $0.85$ with all three replacement heads. TT and CP obtain $0.8635$ and $0.8613$, respectively, only $0.0045$ and $0.0067$ below Tucker's $0.8680$. The MLP head achieves $0.8509$, indicating that TensorSSM can also be trained effectively with a generic neural reconstruction module, although the structured tensor heads provide better reconstruction quality. These results support the compatibility of TensorSSM with different reconstruction heads, with Tucker retained as the default because it gives the best observed performance.

TensorSSM also achieves higher Fitness than TensorCodec across all four head families. TensorCodec obtains $0.7356$, $0.7542$, $0.7206$, and $0.7304$ with CP, TT, MLP, and Tucker, respectively, with its original TT head performing best. The corresponding absolute Fitness gains of TensorSSM range from $0.1093$ to $0.1376$. TensorSSM therefore retains its reconstruction advantage across different head families, supporting the effectiveness of the framework beyond its default Tucker reconstruction head.

\subsection{Component Ablation}
\label{sec:component_ablation}

To evaluate the contribution of each component in TensorSSM, we conduct an ablation study on the Action dataset. Specifically, we individually remove the pairwise interaction module, the SSM module, and the coarse-to-fine tokenization mechanism. We also introduce two self-only variants for FiLM conditioning and feature fusion to examine the role of cross-mode information. Here, self-only means that each module uses only features from the current tensor mode, excluding information from other modes while retaining the module itself. The results are
reported in Table~\ref{tab:tensorssm_ablation_action}.

\begin{table}[t]
  \centering
  \caption{Ablation results of TensorSSM on the Action dataset.
  Self-only FiLM and self-only fusion remove information from the
  other tensor modes in the corresponding modules.}
  \label{tab:tensorssm_ablation_action}

  \normalsize
  \renewcommand{\arraystretch}{1.1}

  \begin{tabular*}{\columnwidth}{
    @{\extracolsep{\fill}}
    lr
    @{}
  }
    \toprule
    \textbf{Variant}
    & \textbf{Fitness} \\
    \midrule

    Full TensorSSM
    & \textbf{0.8680} \\

    Self-only FiLM
    & 0.8304 \\

    Self-only Fusion
    & 0.8390 \\

    w/o Pairwise Interaction
    & 0.8530 \\

    w/o SSM
    & 0.7778 \\

    w/o Coarse-to-Fine
    & 0.6819 \\

    \bottomrule
  \end{tabular*}
  \vspace{-3mm}
\end{table}
Table~\ref{tab:tensorssm_ablation_action} shows that the full TensorSSM achieves the highest Fitness, and every ablation causes a performance drop. The largest degradation occurs when coarse-to-fine tokenization is removed: Fitness decreases from $0.8680$ to $0.6819$. This variant retains the recurrent SSM but represents each raw coordinate in a single step, indicating that the hierarchical coordinate sequence is essential for learning informative multi-scale representations.

Removing the SSM reduces Fitness by $0.0902$ to $0.7778$, providing evidence that recurrent state propagation across coordinate scales is the key to tensor memory. The self-only FiLM and self-only fusion variants obtain $0.8304$ and $0.8390$, corresponding to drops of $0.0376$ and $0.0290$. Because these variants retain the corresponding modules but exclude information from the other tensor modes, their degradation directly demonstrates the value of cross-mode conditioning and feature exchange. With the FiLM module, the state memory can be enhanced by the cross-mode spatial information. Removing the pairwise interaction produces a smaller but consistent drop of $0.0150$, showing that explicit pairwise modeling provides a complementary improvement.

Overall, the results identify coarse-to-fine tokenization and recurrent state modeling as the two most influential components, while FiLM, cross-mode fusion, and pairwise interaction provide additional gains. The self-only variants have the same parameter count as the full model, so their differences are not caused by reduced capacity. In summary, the full model consistently outperforms all simplified variants, supporting the complementary design of its components.

\subsection{Comparison under Matched Hidden Size}

In this subsection, we conduct a hidden-size-matched comparison to investigate whether the performance advantage of TensorSSM over TensorCodec persists under the same hidden dimension. Specifically, we align the hidden representation sizes of both models across six datasets and compare their reconstruction Fitness. The results are presented in Figure~\ref{fig:same_width_fitness}. TensorSSM consistently outperforms TensorCodec on all six datasets, achieving an average absolute Fitness gain of approximately $0.114$. These results suggest that the improvement cannot be attributed solely to a larger hidden representation. One contributing factor may be the incorporation of cross-mode spatial information into the state-space representations, which enables TensorSSM to capture coordinate-dependent structures more effectively.

The largest improvements occur on NYC and Action, where Fitness increases from $0.634$ to $0.819$ and from $0.685$ to $0.868$, corresponding to absolute gains of $0.185$ and $0.183$. TensorSSM also improves Activity by $0.134$, Air Quality by $0.077$, and Uber by $0.066$. On PEMS-SF, the gain is smaller but remains positive, increasing Fitness from $0.564$ to $0.605$. The improvements across both three-way and four-way tensors show that TensorSSM can use the same nominal feature width more effectively across datasets with different orders and dimensions.

By controlling the hidden size, we can compare the reconstruction quality and directly validate the expressiveness of TensorSSM in memorizing tensor entries. These results also support the effectiveness of combining coarse-to-fine coordinate tokenization, which serves as the essential transformation of tensor entries, so that they can be learned and compressed within multi-scale sequences.

\begin{figure}
  \centering
  \includegraphics[width=\columnwidth]
  {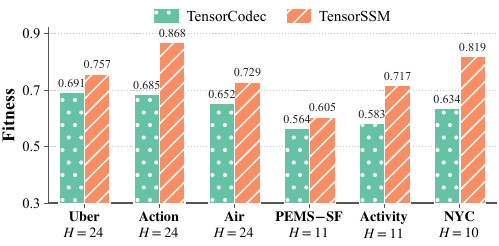}
  \caption{Fitness comparison between TensorSSM and TensorCodec under matched hidden sizes. The value of $H$ below each dataset denotes the hidden size shared by both models.}
  \label{fig:same_width_fitness}
  \vspace{-3mm}
\end{figure}

\subsection{Visualization of Spatial-Aware State Transition Operator}
\label{sec:spatial_transition_visualization}

In this subsection, we examine whether the cross-mode spatial information injected into the transition matrix actually changes how each mode memorizes its coordinate, and whether this change follows a consistent pattern. We use the model trained on Action without any further training. We fix the mode-1 coordinate at $i_1=50$ and sample 64 coordinates uniformly along each of mode 2 and mode 3, which gives a grid of 4,096 positions. We then inspect the transition matrices of the first SSM layer, which contains 24 channels with five states each, at the hundreds-, tens-, and units-digit scales. Specifically, the hundreds-digit scale controls the dynamics of tensor entries with mode indices from 1 to 999, then tens-digit scales controls the tensor entries with mode indices from 1 to 99 and units-digit scale controls indices from 1 to 9, which is a coarse-to-fine multi-scale paradigm. Because $i_1$ is fixed, the mode-1 coordinate tokens are identical at every grid position. Any change in the mode-1 transition matrix can therefore only come from the spatial information of mode 2 and mode 3.

\begin{figure}
    \centering
    \begin{subfigure}[t]{\linewidth}
        \centering
        \includegraphics[height=5.5cm]{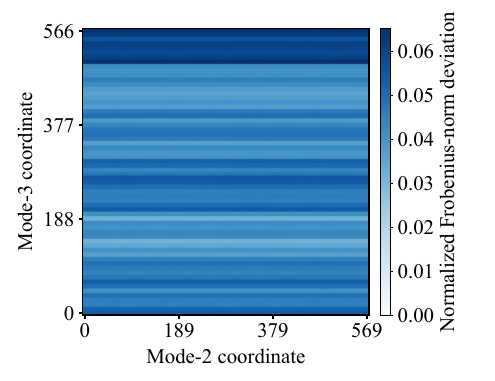} %
        \caption{Frobenius Norm of transition matrix in sampled entries.}
        \label{fig:spatial_A}
    \end{subfigure}
    \hfill
    \begin{subfigure}[t]{\linewidth}
        \centering
        \includegraphics[height=5.5cm]{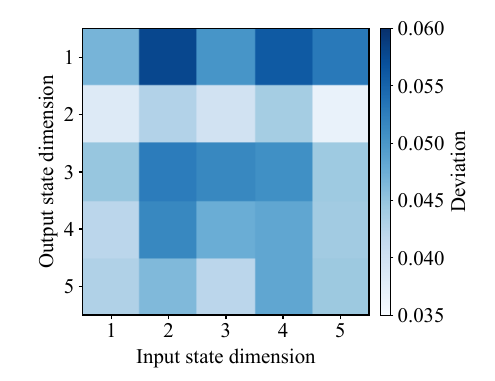}
        \caption{Entry-wise variation of transition operator.}
        \label{fig:spatial_B}
    \end{subfigure}
    \caption{Intensity of state transitions from the perspectives of entries and states. (a). Spatial modulation of the first-layer dynamic transition matrices on Action with $i_1=50$ fixed. The axes correspond to sampled mode-2 and mode-3 coordinates. (b). Entry-wise deviation of the first-layer dynamic transition matrices over the coordinate grid in (a). Rows and columns correspond to the two state dimensions, respectively.}
    \label{training}
    \vspace{-3mm}
\end{figure}

\begin{figure*}
 \centering
 \includegraphics[width=0.9\linewidth]{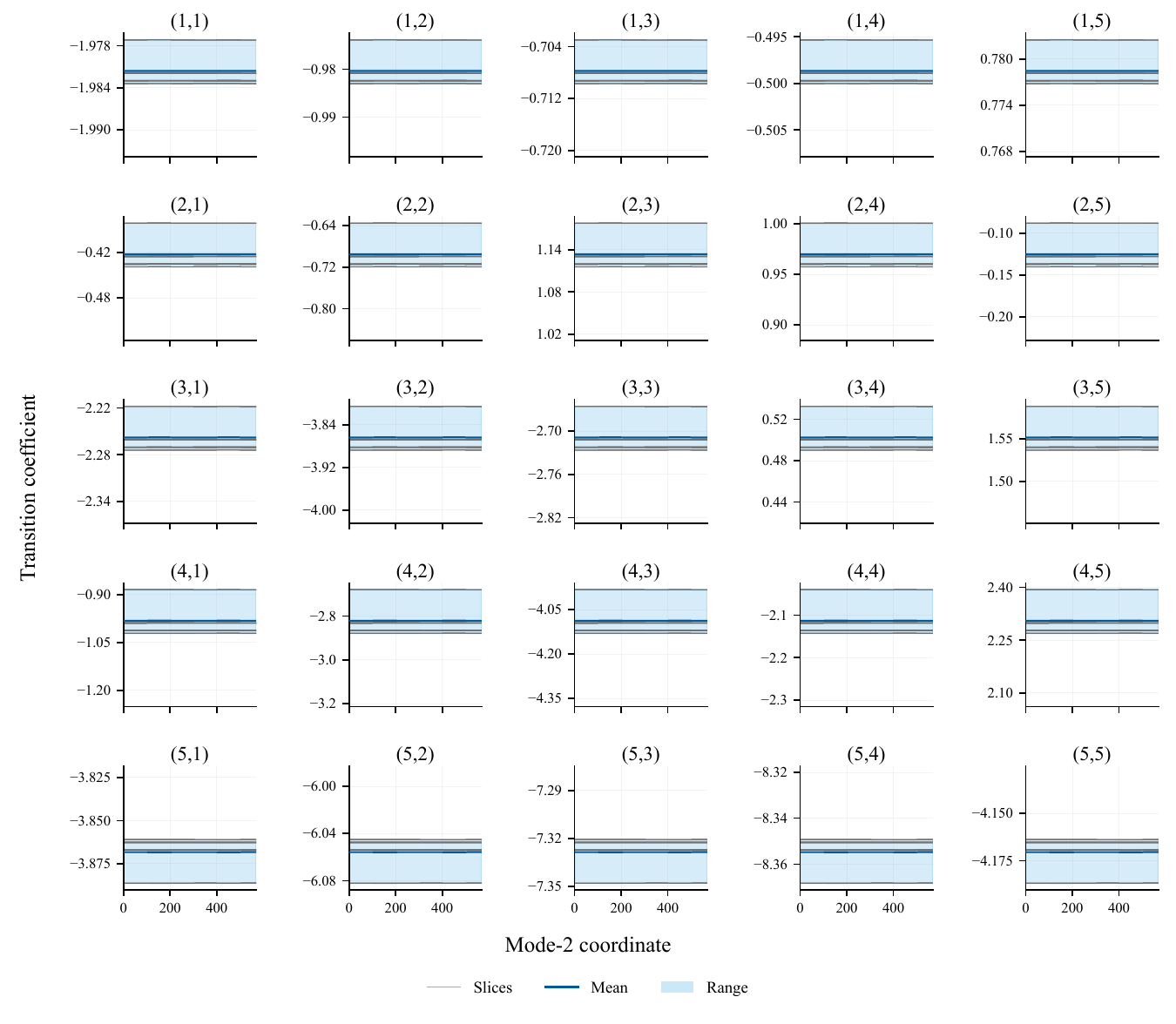}
  \caption{Raw transition coefficients at the hundreds-digit scale in the first SSM layer, with $i_1=50$. Panel $(r,c)$ corresponds to matrix row $r$ and column $c$. Each gray curve fixes one of 64 sampled mode-3 coordinates and varies over 64 sampled mode-2 coordinates. The blue line shows the slice mean, and the shaded region spans the slice minimum and maximum.}
  \label{fig:A_hundreds}
  \vspace{-5mm}
\end{figure*}

Figure~\ref{fig:spatial_A} shows that, with mode 1 fixed, the deviation of its transition matrix follows a clear directional pattern: it changes markedly with the mode-3 coordinate but hardly at all with the mode-2 coordinate. Each pixel corresponds to a position $(i_2, i_3)$, and its color measures how far the transition matrix at that position departs from the average over all positions. Moving along mode 3 for a fixed mode-2 coordinate, the color changes considerably and forms horizontal bands, with a maximum range of about $0.034$ within a single column. Different mode-3 positions thus modulate the mode-1 transition matrix with different strengths. In this sense, the spatial information of mode 3 induces distinguishable memory patterns, and the model has to adopt different patterns to absorb it. Moving along mode 2 for a fixed mode-3 coordinate, the color stays nearly constant, with a maximum range of only about $0.0008$ within a single row, roughly one fortieth of that along mode 3. The mode-2 position therefore has little effect on how mode 1 memorizes, and a nearly constant memory pattern suffices to handle the information from mode 2. The cross-mode modulation is thus selective: the model adjusts its memory pattern only for the modes whose spatial information is discriminative. 

\begin{figure*}
  \centering
  \includegraphics[width=0.9\linewidth]{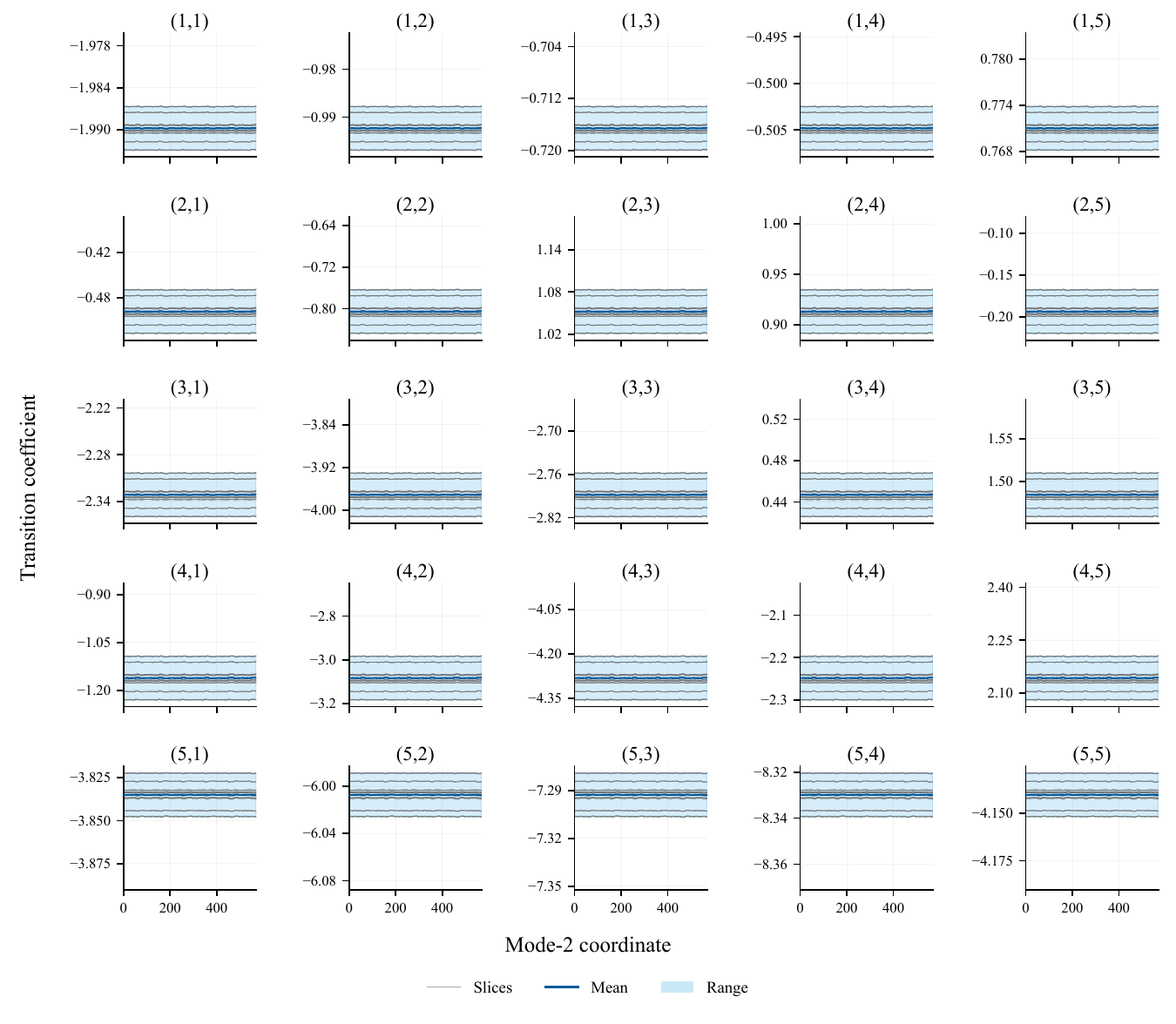}
  \caption{Raw transition coefficients at the tens-digit scale. Coordinates, layer, channel, curve definitions, and entry-wise vertical limits follow Figure~\ref{fig:A_hundreds}. Slices with identical same-scale digits at the same panel can produce overlapping curves.}
  \label{fig:A_tens}
  \vspace{-5mm}
\end{figure*}

Figure~\ref{fig:spatial_B} further shows that the spatial modulation does not alter each matrix entry independently, but adjusts the couplings between states in a structured way. Each cell reports how much the corresponding state values of mode 1 modulated by mode 2 and mode 3 vary over all 4,096 positions, which shows the memorization intensity deviation of the transition operator. The deviation $\Delta_A$ is calculated by: 
\begin{equation}
    \Delta_A = \sqrt{\dfrac{\sum_{i=1}^{N_1} \sum_{j=1}^{N_2} (\boldsymbol{\bar{A}}_{i,j}-\frac{1}{K} \sum_{s=1}^{K} \boldsymbol{\bar{A}}_{i,j}^{(s)})^2}{N_1 N_2}},
\end{equation}
where $\boldsymbol{\bar{A}}_{i,j}$ denotes the $i$-th row and $j$-th column of the discretized transition operator $\boldsymbol{\bar{A}}$, and $\boldsymbol{\bar{A}}^{(s)}$ denotes the $s$-th transition operator in the $K$ sampled entries. Entries in the same row vary by similar amounts, with within-row coefficients of variation between about 5.0\% and 7.5\%, whereas different rows differ clearly, e.g., about $0.053$ on average in the first row versus $0.040$ in the second. This structure follows from the dynamic correction $\Delta\boldsymbol{A}_{k}=\alpha u_{k}\boldsymbol{p}\boldsymbol{q}^{\top}$, where $\boldsymbol{q}$ determines which states are read, $\boldsymbol{p}$ determines which states the result is written back to, and the cross-mode input $u_{k}$ sets the strength of the modulation. The spatial information therefore acts on a coordinated set of state interactions, and the differences between rows reflect how $\boldsymbol{p}$ distributes the modulation across output states.

It can be observed that the variation among state values within each row of the transition operator is minimal. This phenomenon indicates that, when one mode is fixed, the other two modes (modes 2 and 3 in this instance) exert a highly homogeneous influence on the tensor core corresponding to the current mode (mode 1). This implies that fixing a specific mode induces consistent tensor-coordinate memory paradigms or numerical patterns.

Furthermore, for a fixed mode, a substantial discrepancy in deviations among the transition operators is also observed between the first and second rows of the transition operator: the first row exhibits consistently higher deviations (indicated by darker colors), whereas the second row remains predominantly lower (lighter colors). This demonstrates that, under a fixed mode, the memory influence paradigms of individual matrix entries exhibit structural commonalities along the row or column dimensions. Consequently, the update dynamics of a specific state can be enhanced at some specific state dimensions to memorize essential spatial correlations and value distributions.

Figure~\ref{fig:spatial_A} reports the magnitude of the variation averaged over channels and scales, but does not show what values the transition matrix actually takes or how they change with position. Figures~\ref{fig:A_hundreds}--\ref{fig:A_units} therefore plot the raw values of the transition-matrix entries. We take channel 4 as an example, whose overall variation is closest to the median of the 24 channels, as a representative, and show the results at the hundreds-, tens-, and units-digit scales separately. Each figure consists of $5\times5$ panels, and each panel corresponds to the entry value of transition operator $\boldsymbol{\bar{A}}$. Within a panel, each gray curve represents the dynamics at the specific position of $\boldsymbol{\bar{A}}$ among different sampled entries with fixed mode 1.

Three observations follow from these figures. First, every entry stays within a narrow range. For example, entry $(1,1)$ remains between $-1.990$ and $-1.978$ regardless of the mode-2 and mode-3 coordinates, a change of less than 1\%. The spatial information therefore only fine-tunes the base transition matrix without altering its overall structure, so the memory capability of the HiPPO matrix is preserved, consistent with the stability result of Theorem~1 in the appendix. Second, each gray curve is nearly flat, while different curves sit at different heights. The flat curves indicate that the entries barely change as the mode-2 coordinate varies, and the separation between curves indicates that changing the mode-3 coordinate moves the entries to different levels. This is the same directional pattern as in Figure~\ref{fig:spatial_A}, now visible directly in the coefficient values. Third, the same entry sits at different levels across the three scales; for instance, entry $(1,1)$ is generally lower at the tens-digit scale than at the hundreds-digit scale. The model thus uses different transition matrices when reading the hundreds, tens, and units digits, so coarse and fine positional information are memorized in different ways. In addition, many gray curves overlap because each scale depends only on the digit at that scale: the hundreds digit of mode 3 takes only six values (0--5), and the tens and units digits take ten values each, so the 64 curves can occupy at most six or ten distinct levels.

\begin{figure*}
  \centering
  \includegraphics[width=0.9\linewidth]
  {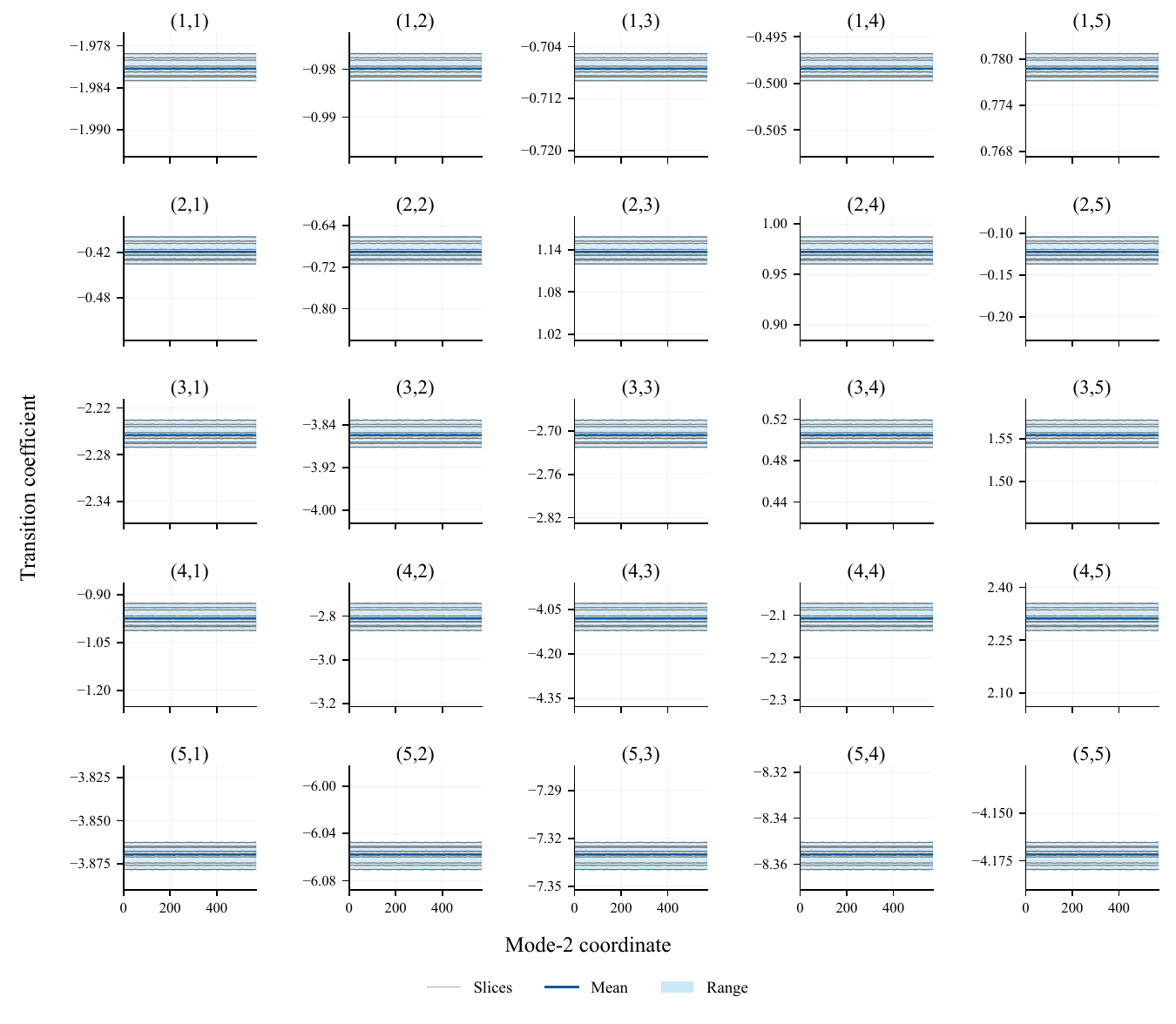}
  \caption{Raw transition coefficients at the units-digit scale. Coordinates, layer, channel, curve definitions, and entry-wise vertical limits follow Figure~\ref{fig:A_hundreds}. Slices with identical same-scale digits at the same panel can produce overlapping curves.}
  \label{fig:A_units}
  \vspace{-5mm}
\end{figure*}

\begin{figure}
    \centering
    \begin{subfigure}[t]{\linewidth}
        \centering
        \includegraphics[height=6.5cm]{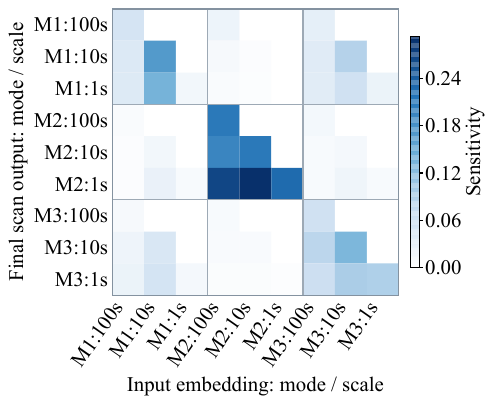}
        \caption{TensorSSM.}
    \end{subfigure}
    \begin{subfigure}[t]{\linewidth}
        \centering
        \includegraphics[height=6.5cm]{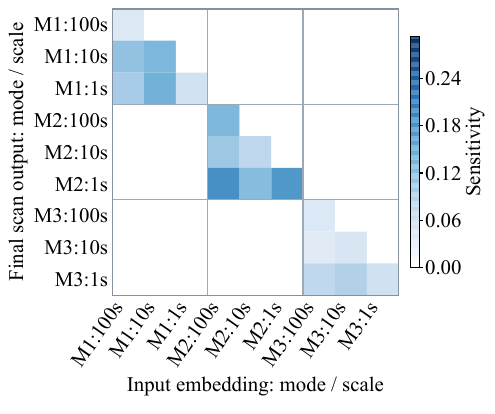}
        \caption{Self-only fusion.}
    \end{subfigure}
    \caption{Cross-mode sensitivity of the SSM outputs at the sampled entry $(94, 279, 330)$ of the Action dataset. Each cell reports the Frobenius norm $S_{a,b}$ of the channel Jacobian block between the input embedding at position $b$ (columns) and the final scan output at position $a$ (rows), where each position is indexed by mode (M1--M3) and digit scale (100s, 10s, 1s). Measurements are taken before the final channel mixing and FiLM, and both panels share the same color scale. (a). The full TensorSSM exhibits nonzero off-diagonal mode blocks, indicating that each SSM stream responds to the other modes. (b). In the self-only fusion variant, all off-diagonal mode blocks are exactly zero.}
    \label{fig:cross_mode_sensitivity}
    \vspace{-3mm}
\end{figure}

Figure~\ref{fig:cross_mode_sensitivity} complements the operator analysis by examining whether cross-mode dependencies are expressed in the SSM outputs. Within each mode, 100s, 10s, and 1s denote the hundreds-, tens-, and units-digit scales, showing the transition intensity in the multi-scale sequences. Each cell shows the Frobenius norm of the corresponding channel Jacobian block, including derivatives through the input-dependent transition parameters, which shows the sensitivity of the output at mode $a$ to the indices representations at mode $b$. This sensitivity reflects how the positions of the tensor entries affect the predicted values, which also reflects the memory efficacy of the INR. For a specific input position $b$ and output position $a$, we compute the derivatives across all channels to obtain a gradient matrix:
\begin{equation}
    J_{a,b}[h, k] = \frac{\partial y_{a,h}}{\partial e_{b,k}}.
\end{equation}
This represents the magnitude of change in the $h$-th channel of output position $a$ when the $k$-th channel of input position $b$ undergoes a slight perturbation. Then we calculate the Frobenius norm of the gradient matrix:
\begin{equation}
    S_{a,b} = \sqrt{\sum_{h=1}^{H} \sum_{k=1}^{H} J_{a,b}[h, k]^2},
\end{equation}
where $H$ is the number of channels. This value indicates the overall local sensitivity between the pair of positions. The measurements are taken before the final channel mixing and FiLM. Both panels share the same linear color scale. All reported tensor coordinates use zero-based indexing. At the sampled entry $(94,279,330)$ in the Action dataset, the full TensorSSM exhibits nonzero off-diagonal mode blocks, whereas these blocks are exactly zero in the self-only fusion variant at the measured stage. This contrast identifies the role of cross-mode feature exchange in making each SSM stream responsive to other modes before the final channel mixing and FiLM. The responses are also selective: for the mode-1 output block, sensitivity to mode-3 embeddings is larger than sensitivity to mode-2 embeddings, illustrating unequal cross-mode influence at this entry. Cross-mode blocks account for approximately 5.85\% of the total squared Jacobian norm, indicating that cross-mode sensitivity supplements the stronger within-mode dependencies rather than dominating the local response. The nonzero coarse-to-fine entries further show that smaller-scale embeddings can influence larger-scale outputs, while the zero reverse-direction entries reflect the causal scale ordering. Together, these patterns demonstrate the coexistence of cross-mode conditioning and propagation across coordinate scales.

In summary, these visualizations connect spatial conditioning at the operator level with cross-mode dependencies at the representation level. Other-mode coordinates modulate the transition operator even when the target coordinate is fixed; this modulation is structured by the low-rank state interaction and varies across coordinate scales. The sensitivity analysis further shows that cross-mode information influences the SSM outputs. These observations provide a mechanistic interpretation of how TensorSSM incorporates spatial context into state-space representations, complementing the reconstruction and ablation
results.

%% file: Content/5Conclusion.tex
\section{Conclusion}
In this paper, we propose TensorSSM, a novel Implicit Neural Representation for functional decomposition method that memorizes spatial positions within a dynamic State Space Model. 
To address the challenge of overlooking positional correlation among tensor entries, TensorSSM first decomposes mode-wise coordinates into sequences spit by K-ary rule, encoding spatial tokens from coarse to fine. Furthermore, the SSM's state space is refined to explicitly map cross-mode spatial dependencies, allowing inter-mode interactions to enhance the updates of each mode. By integrating cross-mode information in the state transition operator, TensorSSM effectively integrates spatial information into the state space and the tensor core, enhancing the expressiveness of the data compression. Extensive experiments demonstrate that TensorSSM achieves significantly high reconstruction accuracy and storage efficiency, exhibiting robustness and expressiveness across diverse tensor datasets.

%% file: Content/6Appendix.tex
\appendix

\section{Theoretical Analysis}

\subsection{Spectrum Properties of TensorSSM}
According to the previous work S4~\citep{gu2021efficiently}, if the eigenvalue modulus of the discretized transition operator $\bar{\boldsymbol{A}}$ is greater than 1, the increase in sequence length may lead to gradient explosion and inferior training performance.
If the modulus of the discretized transition operator $\bar{\boldsymbol{A}}$ is smaller than 1, which is the case for most random initialization or forced constraints for stable training, the model will exponentially forget information from in the long sequence. Therefore, in traditional SSM, it is difficult for ordinary $\boldsymbol{A}$ to maintain its eigenvalues around 1, so it is either unstable or unable to remember long-range information~\citep{nguyen2026muonssm}.

In this paper, we apply rank-one perturbation based on cross-mode spatial information to fine-tune the transition operator, enhancing the memorization of mode-specific tensor core. Since the length of multi-scale coarse-to-fine token sequence varies according to the predefined base-$b$ digits, in some settings, the lengths of token sequence may be also large. Therefore, we prove that the spatial-aware transition of SSM is also a stable update, which provides a solid support for TensorSSM for various sizes of tensors and tokenization settings.

\begin{theorem}[Dynamic $\boldsymbol{A}$ preserves Hurwitz property and multi-scale memory]
\label{thm:dynamic-A-eigen}
Let $\boldsymbol{A} \in \mathbb{R}^{N \times N}$ be the HiPPO-LegS matrix, with eigenvalues
$\lambda_n = -(n+1)$.  Let $\boldsymbol{P}, \boldsymbol{Q} \in \mathbb{R}^{N}$ be arbitrary vectors, $\alpha > 0$ a scalar, and
$u_k \in \mathbb{R}$ the input at step $k$.  Define the rank-one Dynamic-$\boldsymbol{A}$ perturbation
\begin{equation}
\label{eq:dynamic-Ak}
\boldsymbol{A}_k \;:=\; \boldsymbol{A} + \alpha \, u_k \, \boldsymbol{P} \boldsymbol{Q}^{\top} \;\in\; \mathbb{R}^{N \times N}.
\end{equation}
Let $\kappa(\boldsymbol{V})$ denote the condition number of the eigenvector matrix of $\boldsymbol{A}$
(see Lemma \ref{lem:hippo-diag} below).  Define the bilinear discretization
\begin{equation}\label{eq:bilinear-Ak}
\bar{\boldsymbol{A}}_k \;:=\; \Bigl(\boldsymbol{I} - \tfrac{\Delta t}{2} \boldsymbol{A}_k\Bigr)^{-1} \Bigl(\boldsymbol{I} + \tfrac{\Delta t}{2} \boldsymbol{A}_k\Bigr).
\end{equation}
Then there exists a constant $\delta_0 > 0$ that depends only on $\boldsymbol{A}$ such that
whenever
\begin{equation}\label{eq:dynamic-A-small}
\epsilon_k \;:=\; |\alpha| \, |u_k| \, \|\boldsymbol{P}\| \, \|\boldsymbol{Q}\| \;<\; \delta_0,
\end{equation}
the following hold:
\begin{enumerate}[leftmargin=*]
    \item[\normalfont(i)] (\textbf{$\boldsymbol{A}_k$ is Hurwitz.}) Every eigenvalue of $\boldsymbol{A}_k$ has strictly negative real part:
    \begin{equation}\label{eq:Ak-hurwitz}
    \operatorname{Re}\!\bigl(\lambda_n(\boldsymbol{A}_k)\bigr) \;<\; 0, \qquad n = 0, 1, \ldots, N-1.
    \end{equation}
    More precisely, for each $n$ there exists an integer $\sigma(n) \in \{0, 1, \ldots, N-1\}$ such that
    \begin{equation}\label{eq:eig-perturb}
    \bigl| \lambda_n(\boldsymbol{A}_k) - \lambda_{\sigma(n)}(\boldsymbol{A}) \bigr|
    \;\leq\; \kappa(\boldsymbol{V}) \cdot \epsilon_k.
    \end{equation}
    In particular, for sufficiently small $\epsilon_k$ the labeling is preserved ($\sigma(n) = n$).

    \item[\normalfont(ii)] (\textbf{$\bar{\boldsymbol{A}}_k$ is Schur stable.}) Every eigenvalue $\mu_n(\bar{\boldsymbol{A}}_k)$ of $\bar{\boldsymbol{A}}_k$ lies strictly inside the open unit disk:
    \begin{equation}\label{eq:Abar-k-schur}
    |\mu_n(\bar{\boldsymbol{A}}_k)| \;<\; 1, \qquad n = 0, 1, \ldots, N-1.
    \end{equation}

    \item[\normalfont(iii)] (\textbf{Multi-scale memory is preserved.}) For each $n$, define the $1/e$-decay time of the $n$-th mode of $\boldsymbol{A}_k$ by
    \begin{equation}
    \label{eq:tau-Ak}
    \tau_n^{(k)} \;:=\; -\frac{1}{\operatorname{Re}\!\bigl(\lambda_n(\boldsymbol{A}_k)\bigr)} \;>\; 0.
    \end{equation}
    Then
    \begin{equation}
    \label{eq:tau-perturb}
    \bigl| \tau_n^{(k)} - \tau_n \bigr|
    \;\leq\; C_{\tau} \cdot \kappa(\boldsymbol{V}) \cdot \epsilon_k,
    \end{equation}
    where $C_{\tau}$ is an absolute constant (independent of $\boldsymbol{A}_k$, $\boldsymbol{P}$, $\boldsymbol{Q}$, $\alpha$, $u_k$, $\Delta t$).
    Consequently the dynamic range satisfies
    \begin{equation}
    \label{eq:tau-range}
    \frac{\tau_0^{(k)}}{\tau_{N-1}^{(k)}} \;=\; N \cdot \bigl(1 + O(\kappa(\boldsymbol{V}) \epsilon_k)\bigr),
    \end{equation}
    i.e., the multi-scale memory structure within the dynamic $\boldsymbol{A}$ is preserved up to a $O(\kappa(\boldsymbol{V}) \epsilon_k)$ relative perturbation.
\end{enumerate}
\end{theorem}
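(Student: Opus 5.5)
The plan is to reduce everything to the Bauer--Fike theorem applied to the diagonalizable matrix $\boldsymbol{A}$, and then push the resulting eigenvalue localization through the bilinear (Cayley) map and through the reciprocal map $\lambda \mapsto -1/\operatorname{Re}\lambda$.

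\textbf{Step 1: diagonalization of $\boldsymbol{A}$.} From Lemma~\ref{lem:hippo-diag}, the HiPPO-LegS matrix is lower triangular with diagonal $-(n+1)$. Its eigenvalues are therefore $\lambda_n = -(n+1)$, which are real, simple and pairwise separated by $1$. Hence $\boldsymbol{A} = \boldsymbol{V}\boldsymbol{\Lambda}\boldsymbol{V}^{-1}$ with $\kappa(\boldsymbol{V}) = \|\boldsymbol{V}\|\|\boldsymbol{V}^{-1}\| < \infty$.

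\textbf{Step 2: Bauer--Fike.} Write $\boldsymbol{A}_k = \boldsymbol{A} + \boldsymbol{E}_k$ with $\boldsymbol{E}_k = \alpha u_k \boldsymbol{P}\boldsymbol{Q}^\top$. Then $\|\boldsymbol{E}_k\|_2 = |\alpha||u_k|\|\boldsymbol{P}\|\|\boldsymbol{Q}\| = \epsilon_k$. Bauer--Fike gives, for every eigenvalue $\mu$ of $\boldsymbol{A}_k$, some $\lambda_{\sigma}$ with $|\mu - \lambda_\sigma| \le \kappa(\boldsymbol{V})\epsilon_k$, which is \eqref{eq:eig-perturb}.

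Choose $\delta_0 = 1/(2\kappa(\boldsymbol{V}))$, say $\delta_0 < 1/(2\kappa(\boldsymbol{V}))$ strictly for safety. Then the discs $D_n$ of radius $\kappa(\boldsymbol{V})\epsilon_k < 1/2$ around the $\lambda_n$ are pairwise disjoint. A continuity argument on $\boldsymbol{A} + t\boldsymbol{E}_k$, $t \in [0,1]$, keeps the count of eigenvalues in each disc constant, because eigenvalues depend continuously on $t$ and cannot cross between disjoint discs. So each disc contains exactly one eigenvalue, which gives the labeling $\sigma(n) = n$.

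Since every disc lies in $\{\operatorname{Re} z \le -1 + 1/2\}$, part (i) follows. A side consequence: each $\lambda_n(\boldsymbol{A}_k)$ is the unique eigenvalue in a disc symmetric about the real axis, and complex eigenvalues of a real matrix come in conjugate pairs, so these eigenvalues are real.

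\textbf{Step 3: Schur stability.} For (ii), note that $\bar{\boldsymbol{A}}_k = r(\boldsymbol{A}_k)$ with $r(z) = (1 + \tfrac{\Delta t}{2}z)/(1 - \tfrac{\Delta t}{2}z)$. The matrix $\boldsymbol{I} - \tfrac{\Delta t}{2}\boldsymbol{A}_k$ is invertible because its eigenvalues $1 - \tfrac{\Delta t}{2}\lambda_n(\boldsymbol{A}_k)$ have real part $\ge 1$. By the spectral mapping theorem, the eigenvalues of $\bar{\boldsymbol{A}}_k$ are $r(\lambda_n(\boldsymbol{A}_k))$.

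The Cayley map sends the open left half-plane into the open unit disk: for $\operatorname{Re} z < 0$ we have $|1 + \tfrac{\Delta t}{2}z| < |1 - \tfrac{\Delta t}{2}z|$. This holds for every $\Delta t > 0$, so no restriction on $\Delta t$ is needed.

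\textbf{Step 4: decay times.} For (iii), use $|\operatorname{Re}\lambda_n(\boldsymbol{A}_k) - \lambda_n| \le \kappa(\boldsymbol{V})\epsilon_k \le 1/2$, together with $|\operatorname{Re}\lambda_n(\boldsymbol{A}_k)| \ge n + 1/2 \ge 1/2$. Then
\[
|\tau_n^{(k)} - \tau_n| = \frac{|\operatorname{Re}\lambda_n(\boldsymbol{A}_k) - \lambda_n|}{|\operatorname{Re}\lambda_n(\boldsymbol{A}_k)|\,|\lambda_n|} \le \frac{\kappa(\boldsymbol{V})\epsilon_k}{(1/2)\cdot 1},
\]
so $C_\tau = 2$ works.

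For \eqref{eq:tau-range}, $\tau_0 = 1$ and $\tau_{N-1} = 1/N$. Write
\[
\frac{\tau_0^{(k)}}{\tau_{N-1}^{(k)}} = \frac{\operatorname{Re}\lambda_{N-1}(\boldsymbol{A}_k)}{\operatorname{Re}\lambda_0(\boldsymbol{A}_k)} = \frac{N(1 + O(\kappa\epsilon_k/N))}{1 + O(\kappa\epsilon_k)} = N\bigl(1 + O(\kappa(\boldsymbol{V})\epsilon_k)\bigr).
\]

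\textbf{Main obstacle.} The delicate point is $\kappa(\boldsymbol{V})$. The eigenvector matrix of HiPPO-LegS is famously ill-conditioned, with $\kappa$ growing exponentially in $N$, so $\delta_0$ is tiny but still depends only on $\boldsymbol{A}$. That dependence is all the statement asks for. The statement relies on Lemma~\ref{lem:hippo-diag} for diagonalizability and finiteness of $\kappa$; if that lemma is weaker than needed, I would argue diagonalizability directly from the simple spectrum.

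The other subtle point is making the labeling $\sigma(n) = n$ rigorous. For that I would use the homotopy/continuity argument of Step 2 rather than Bauer--Fike alone, since Bauer--Fike by itself does not exclude two perturbed eigenvalues landing in the same disc.
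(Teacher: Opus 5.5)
Your proposal is correct and follows essentially the same route as the paper. It uses Bauer--Fike with $\delta_0 = 1/(2\kappa(\boldsymbol{V}))$ for (i), the Cayley map sending the open left half-plane into the open unit disk for (ii), and the reciprocal perturbation bound with $C_\tau = 2$ plus a first-order expansion of the ratio for (iii). Your continuity argument on $\boldsymbol{A} + t\boldsymbol{E}_k$ is more careful than the paper, which asserts the one-eigenvalue-per-disc labeling $\sigma(n)=n$ only ``by symmetry of the bound'' although Bauer--Fike alone does not give it; your spectral-mapping step in (ii) is likewise cleaner than the paper's eigenvector computation.
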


\subsection{Proof}

\begin{proof}[Proof of Theorem \ref{thm:dynamic-A-eigen}]

We proceed in three steps.

\noindent\textbf{Diagonalization of $\boldsymbol{A}$ and Bauer-Fike setup}. First, we need to show that the HiPPO matrix $\boldsymbol{A}$ can be diagonalized, so that we can prove the fine-tuned spatial-aware state transition operator $\boldsymbol{\bar{A}}$ can also be diagonalized.

\begin{lemma}[Diagonalizability of the HiPPO-LegS matrix]\label{lem:hippo-diag}
The HiPPO-LegS matrix $\boldsymbol{A}$ is diagonalizable:
$\boldsymbol{A} = \boldsymbol{V} \boldsymbol{\Lambda} \boldsymbol{V}^{-1}$, where
$\boldsymbol{\Lambda} = \operatorname{diag}(\lambda_0, \lambda_1, \ldots, \lambda_{N-1})$
with $\lambda_n = -(n+1)$, and $\boldsymbol{V} \in \mathbb{R}^{N \times N}$ is the matrix of eigenvectors
with condition number $\kappa(\boldsymbol{V}) := \|\boldsymbol{V}\| \cdot \|\boldsymbol{V}^{-1}\| < \infty$.
\end{lemma}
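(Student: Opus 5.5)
The plan is to exploit the explicit triangular structure of the HiPPO-LegS matrix. Recall its standard form (as in \citep{gu2020hippo, gu2021efficiently}): for $0 \le n,k \le N-1$,
\begin{equation*}
\boldsymbol{A}_{nk} = -\begin{cases} \sqrt{2n+1}\,\sqrt{2k+1}, & n > k,\\ n+1, & n = k,\\ 0, & n < k. \end{cases}
\end{equation*}
The first step is to observe that $\boldsymbol{A}$ is lower triangular. Hence $\det(\boldsymbol{A} - \lambda \boldsymbol{I}) = \prod_{n=0}^{N-1}\bigl(-(n+1) - \lambda\bigr)$. This shows directly that the spectrum is $\{\lambda_n = -(n+1)\}_{n=0}^{N-1}$, which confirms the eigenvalue claim used in Theorem~\ref{thm:dynamic-A-eigen}.

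The second step is diagonalizability. The $N$ eigenvalues $-1, -2, \ldots, -N$ are pairwise distinct. A matrix with $N$ distinct eigenvalues has $N$ linearly independent eigenvectors, since eigenvectors for distinct eigenvalues are independent. Stacking them as columns gives an invertible $\boldsymbol{V}$ with $\boldsymbol{A} = \boldsymbol{V}\boldsymbol{\Lambda}\boldsymbol{V}^{-1}$.

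The eigenvector $\boldsymbol{v}^{(n)}$ can also be constructed explicitly, which shows that $\boldsymbol{V}$ may be taken real and lower triangular. Set $v^{(n)}_j = 0$ for $j<n$ and $v^{(n)}_n = 1$. For $j > n$, solve
\begin{equation*}
(\boldsymbol{A}_{jj} - \lambda_n)\, v^{(n)}_j = -\sum_{i<j} \boldsymbol{A}_{ji} v^{(i)\,\cdot}_{i}\Big|_{\text{with } v^{(n)}}
\end{equation*}
by forward substitution. This is possible because $\boldsymbol{A}_{jj} - \lambda_n = n - j \neq 0$ for $j \neq n$. Each $\boldsymbol{v}^{(n)}$ is then real, and $\boldsymbol{V}$ is unit lower triangular, so $\det \boldsymbol{V} = 1$.

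The final step is finiteness of $\kappa(\boldsymbol{V})$, which is immediate once $\boldsymbol{V}$ is invertible. Both $\|\boldsymbol{V}\|$ and $\|\boldsymbol{V}^{-1}\|$ are norms of fixed finite-dimensional matrices and are therefore finite.

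I do not expect a genuine obstacle. The only point needing care is stating the convention for $\boldsymbol{A}$, namely the sign and the triangular orientation. With any of the equivalent LegS conventions, such as the transpose or an absorbed sign, the argument is the same, because triangularity and the diagonal $-(n+1)$ are preserved.

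One caveat should be noted for later use in the theorem. $\kappa(\boldsymbol{V})$ is finite for each fixed $N$, but it may grow quickly with $N$. This is why the theorem's bounds are stated in terms of $\kappa(\boldsymbol{V})$, with $\delta_0$ depending on $\boldsymbol{A}$. A natural choice is $\delta_0 = 1/(2\kappa(\boldsymbol{V}))$. With it, the Bauer--Fike discs of radius $\kappa(\boldsymbol{V})\epsilon_k < 1/2$ around the $\lambda_n$ are disjoint and stay in the open left half-plane.
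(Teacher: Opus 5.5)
Your proposal is correct and follows essentially the paper's argument: the $N$ pairwise distinct eigenvalues $-1,\dots,-N$ yield $N$ linearly independent eigenvectors, so $\boldsymbol{A}=\boldsymbol{V}\boldsymbol{\Lambda}\boldsymbol{V}^{-1}$ with invertible $\boldsymbol{V}$ and finite $\kappa(\boldsymbol{V})$. Your lower-triangularity observation justifies the spectrum, which the paper only asserts via the HiPPO citation, and your forward-substitution step shows $\boldsymbol{V}$ can be taken real and unit lower triangular; note that the displayed right-hand side there should read $-\sum_{n\le i<j}\boldsymbol{A}_{ji}v^{(n)}_i$.
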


\begin{proof}[Proof of Lemma \ref{lem:hippo-diag}]
The diagonalizability of HiPPO-LegS follows from the explicit construction of Theorem 2 in HiPPO~\citep{gu2020hippo}: the matrix $\boldsymbol{A}$ is the generator of a one-parameter family of orthogonal projections onto Legendre polynomial bases:
\begin{equation}
    \boldsymbol{A}_{nk} = -
    \begin{cases}
        (2n+1)^{1/2}(2k+1)^{1/2} & \textit{if } n > k, \\
        n + 1                    & \textit{if } n = k, \\
        0                        & \textit{if } n < k
    \end{cases},
\end{equation}
and this construction guarantees that $\boldsymbol{A}$ has $N$ linearly independent eigenvectors, and has $N$ distinct eigenvalues $\lambda_0, \ldots, \lambda_{N-1}$. Let $\boldsymbol{v}_n$ be any eigenvector of $\boldsymbol{A}$ for eigenvalue $\lambda_n$. Since the eigenvalues are distinct, the set $\{\boldsymbol{v}_0, \ldots, \boldsymbol{v}_{N-1}\}$ is linearly independent. Additionally, since a matrix is diagonalizable iff it has $N$ linearly independent eigenvectors, $\boldsymbol{A}$ is diagonalizable: $\boldsymbol{A} = \boldsymbol{V} \boldsymbol{\Lambda} \boldsymbol{V}^{-1}$ where $\boldsymbol{V} = [\boldsymbol{v}_0,  \cdots, \boldsymbol{v}_{N-1}]$ and $\boldsymbol{\Lambda} = \operatorname{diag}(\lambda_0, \ldots, \lambda_{N-1})$.
The condition number $\kappa(\boldsymbol{V}) = \|\boldsymbol{V}\| \|\boldsymbol{V}^{-1}\|$ is finite because $\boldsymbol{V}$ is a finite matrix with $\det(\boldsymbol{V}) \neq 0$, and it depends only on $N$.
\qedhere
\end{proof}

For convenience, we introduce the diagonalization
$\boldsymbol{A} = \boldsymbol{V} \boldsymbol{\Lambda} \boldsymbol{V}^{-1}$ with $\boldsymbol{\Lambda} = \operatorname{diag}(\lambda_0, \ldots, \lambda_{N-1})$.
Define
$\tilde{\boldsymbol{P}} := \boldsymbol{V}^{-1} \boldsymbol{P}$ and $\tilde{\boldsymbol{Q}} := \boldsymbol{V}^{\top} \boldsymbol{Q}$, so that $\boldsymbol{P} \boldsymbol{Q}^{\top} = \boldsymbol{V} \tilde{\boldsymbol{P}} \tilde{\boldsymbol{Q}}^{\top} \boldsymbol{V}^{-1}$.
The Dynamic-$\boldsymbol{A}$ matrix then becomes
\begin{equation}\label{eq:Ak-diag}
\boldsymbol{A}_k \;=\; \boldsymbol{V} \bigl( \boldsymbol{\Lambda} + \alpha u_k \tilde{\boldsymbol{P}} \tilde{\boldsymbol{Q}}^{\top} \bigr) \boldsymbol{V}^{-1}.
\end{equation}
Since $\boldsymbol{A}_k$ is similar to $\boldsymbol{\Lambda} + \alpha u_k \tilde{\boldsymbol{P}} \tilde{\boldsymbol{Q}}^{\top}$, the eigenvalues of
$\boldsymbol{A}_k$ are exactly the eigenvalues of the latter matrix, and $\kappa(\boldsymbol{V})$ is the relevant
condition number for eigenvalue perturbation.

\begin{lemma}[Bauer-Fike theorem]\label{lem:bauer-fike}
Let $\boldsymbol{M} = \boldsymbol{W} \boldsymbol{\Sigma} \boldsymbol{W}^{-1}$ be diagonalizable with condition number $\kappa(\boldsymbol{W}) = \|\boldsymbol{W}\| \|\boldsymbol{W}^{-1}\|$. Let $\boldsymbol{E}$ be any matrix. Then every eigenvalue $\tilde\lambda$ of $\boldsymbol{M} + \boldsymbol{E}$ satisfies
\begin{equation}\label{eq:bauer-fike}
\min_{j} |\tilde\lambda - \boldsymbol{\Sigma}_{jj}| \;\leq\; \kappa(\boldsymbol{W}) \cdot \|\boldsymbol{E}\|.
\end{equation}
\end{lemma}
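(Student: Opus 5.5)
The plan is the classical resolvent argument. It is carried out in the operator norm induced by a vector $p$-norm; the spectral norm is the one used throughout the appendix. Fix an eigenvalue $\tilde\lambda$ of $\boldsymbol{M}+\boldsymbol{E}$. If $\tilde\lambda = \boldsymbol{\Sigma}_{jj}$ for some $j$, the left-hand side of \eqref{eq:bauer-fike} is zero and there is nothing to prove. We may therefore assume $\tilde\lambda \neq \boldsymbol{\Sigma}_{jj}$ for all $j$. Then the diagonal matrix $\boldsymbol{\Sigma}-\tilde\lambda\boldsymbol{I}$ is invertible.

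Next we conjugate by $\boldsymbol{W}$. Since $\boldsymbol{M}+\boldsymbol{E}-\tilde\lambda\boldsymbol{I}$ is singular, so is
\[
\boldsymbol{W}^{-1}(\boldsymbol{M}+\boldsymbol{E}-\tilde\lambda\boldsymbol{I})\boldsymbol{W} = (\boldsymbol{\Sigma}-\tilde\lambda\boldsymbol{I}) + \boldsymbol{W}^{-1}\boldsymbol{E}\boldsymbol{W}.
\]
Factoring out the invertible diagonal part shows that $\boldsymbol{I} + (\boldsymbol{\Sigma}-\tilde\lambda\boldsymbol{I})^{-1}\boldsymbol{W}^{-1}\boldsymbol{E}\boldsymbol{W}$ is singular. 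Hence there is a unit vector $\boldsymbol{x}$ with
\[
\boldsymbol{x} = -(\boldsymbol{\Sigma}-\tilde\lambda\boldsymbol{I})^{-1}\boldsymbol{W}^{-1}\boldsymbol{E}\boldsymbol{W}\boldsymbol{x}.
\]
Taking norms and using submultiplicativity gives
\[
1 \le \|(\boldsymbol{\Sigma}-\tilde\lambda\boldsymbol{I})^{-1}\|\,\|\boldsymbol{W}^{-1}\|\,\|\boldsymbol{E}\|\,\|\boldsymbol{W}\| .
\]

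The remaining step is to evaluate the norm of the diagonal resolvent. For any operator norm induced by a $p$-norm, a diagonal matrix has norm equal to the largest modulus of its diagonal entries. Therefore
\[
\|(\boldsymbol{\Sigma}-\tilde\lambda\boldsymbol{I})^{-1}\| = \max_j |\boldsymbol{\Sigma}_{jj}-\tilde\lambda|^{-1} = \Bigl(\min_j |\tilde\lambda-\boldsymbol{\Sigma}_{jj}|\Bigr)^{-1}.
\]
Substituting and rearranging yields $\min_j|\tilde\lambda-\boldsymbol{\Sigma}_{jj}| \le \kappa(\boldsymbol{W})\|\boldsymbol{E}\|$, which is \eqref{eq:bauer-fike}.

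The only point needing care, and the expected obstacle, is the diagonal-norm identity. It holds because $\|\boldsymbol{D}\boldsymbol{x}\|_p \le \max_j|d_j|\,\|\boldsymbol{x}\|_p$, with equality attained at a standard basis vector. This is why the lemma should be read with an induced $p$-norm, the spectral norm in our use, rather than an arbitrary matrix norm. The lemma is then applied in Theorem~\ref{thm:dynamic-A-eigen} with $\boldsymbol{W}=\boldsymbol{V}$, $\boldsymbol{\Sigma}=\boldsymbol{\Lambda}$, and $\boldsymbol{E}=\alpha u_k\boldsymbol{P}\boldsymbol{Q}^\top$. The bound $\|\boldsymbol{E}\|\le\epsilon_k$ holds because the rank-one matrix $\boldsymbol{P}\boldsymbol{Q}^\top$ has spectral norm $\|\boldsymbol{P}\|\,\|\boldsymbol{Q}\|$.
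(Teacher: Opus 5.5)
Your proposal is correct and follows essentially the same route as the paper: conjugate by $\boldsymbol{W}$, factor out the invertible diagonal $\boldsymbol{\Sigma}-\tilde\lambda\boldsymbol{I}$ so that $-1$ is an eigenvalue of $(\boldsymbol{\Sigma}-\tilde\lambda\boldsymbol{I})^{-1}\boldsymbol{W}^{-1}\boldsymbol{E}\boldsymbol{W}$ (your unit vector $\boldsymbol{x}$ is just that eigenvector), bound this by submultiplicativity, and evaluate the diagonal resolvent norm as $1/\min_j|\tilde\lambda-\boldsymbol{\Sigma}_{jj}|$. Restricting explicitly to norms induced by a vector $p$-norm improves on the paper's ``any sub-multiplicative norm'' remark, because the diagonal-norm identity is exactly the step that needs that hypothesis.
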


\begin{proof}[Proof of Lemma \ref{lem:bauer-fike}]
This is the classical Bauer-Fike theorem~\citep{bauer1960norms}.

If $\tilde\lambda \in \{\boldsymbol{\Sigma}_{jj}\}$,
the left-hand side of \eqref{eq:bauer-fike} is $0 \le \kappa(\boldsymbol{V})\|\boldsymbol{E}\|$
(since $\kappa(\boldsymbol{V}) \ge 1$ for any normalized norm). Henceforth assume
$\tilde\lambda \notin \{\boldsymbol{\Sigma}_{jj}\}$, so $(\boldsymbol{\Sigma} - \tilde\lambda \boldsymbol{I})$
is invertible.

Since $\tilde\lambda$ is an eigenvalue of $\boldsymbol{M} + \boldsymbol{E}$, $\det(\boldsymbol{M} + \boldsymbol{E} - \tilde\lambda \boldsymbol{I}) = 0$. Rewriting via similarity, $\det(\boldsymbol{M} + \boldsymbol{E} - \tilde\lambda \boldsymbol{I}) = \det\bigl(\boldsymbol{V}^{-1}(\boldsymbol{M} + \boldsymbol{E} - \tilde\lambda \boldsymbol{I})\boldsymbol{V}\bigr) = \det(\boldsymbol{\Sigma} + \boldsymbol{V}^{-1}\boldsymbol{E}\boldsymbol{V} - \tilde\lambda \boldsymbol{I})$. Factoring out $(\boldsymbol{\Sigma} - \tilde\lambda \boldsymbol{I})$,
\begin{equation}
\label{eq:bauer-fike-det}
0 \;=\; \det(\boldsymbol{\Sigma} - \tilde\lambda \boldsymbol{I}) \cdot
    \det\!\bigl(\boldsymbol{I} + (\boldsymbol{\Sigma} - \tilde\lambda \boldsymbol{I})^{-1} \boldsymbol{V}^{-1} \boldsymbol{E} \boldsymbol{V}\bigr).
\end{equation}
Since $\det(\boldsymbol{\Sigma} - \tilde\lambda \boldsymbol{I}) \neq 0$, the second factor must vanish: $\det\bigl(\boldsymbol{I} + (\boldsymbol{\Sigma} - \tilde\lambda \boldsymbol{I})^{-1} \boldsymbol{V}^{-1} \boldsymbol{E} \boldsymbol{V}\bigr)
= 0$. Equivalently, $-1$ is an eigenvalue of
$(\boldsymbol{\Sigma} - \tilde\lambda \boldsymbol{I})^{-1} \boldsymbol{V}^{-1} \boldsymbol{E} \boldsymbol{V}$.

For any sub-multiplicative norm, $|\lambda| \le \|\boldsymbol{X}\|$ holds for every eigenvalue $\lambda$ of $\boldsymbol{X}$. Applying this to $\boldsymbol{A} = (\boldsymbol{\Sigma} - \tilde\lambda \boldsymbol{I})^{-1} \boldsymbol{V}^{-1} \boldsymbol{E} \boldsymbol{V}$ which has $-1$ as an eigenvalue and using property of $\|\boldsymbol{X}_1\boldsymbol{X}_2\boldsymbol{X}_3\| \le \|\boldsymbol{X}_1\| \|\boldsymbol{X}_2\| \|\boldsymbol{X}_3\|$,
\begin{equation}
\label{eq:bauer-fike-step3}
\begin{aligned}
    1 \;=\; |-1| &\;\le\;
    \bigl\|(\boldsymbol{\Sigma} - \tilde\lambda \boldsymbol{I})^{-1} \boldsymbol{V}^{-1} \boldsymbol{E} \boldsymbol{V}\bigr\|\\
    &\;\le\;
    \|(\boldsymbol{\Sigma} - \tilde\lambda \boldsymbol{I})^{-1}\| \cdot \|\boldsymbol{V}^{-1}\| \cdot \|\boldsymbol{V}\| \cdot \|\boldsymbol{E}\|.
\end{aligned}
\end{equation}

$(\boldsymbol{\Sigma} - \tilde\lambda \boldsymbol{I})^{-1}$ is diagonal with entries
$(\boldsymbol{\Sigma}_{jj} - \tilde\lambda)^{-1}$, so its operator norm equals the maximum absolute diagonal entry:
\begin{equation}
    \|(\boldsymbol{\Sigma} - \tilde\lambda \boldsymbol{I})^{-1}\| \;=\;
\max_j \frac{1}{|\boldsymbol{\Sigma}_{jj} - \tilde\lambda|}
\;=\; \frac{1}{\min_j |\tilde\lambda - \boldsymbol{\Sigma}_{jj}|}.
\end{equation}
Substituting this and $\kappa(\boldsymbol{V}) = \|\boldsymbol{V}\|\cdot\|\boldsymbol{V}^{-1}\|$ into
\eqref{eq:bauer-fike-step3} gives
\begin{equation}
    1 \;\le\; \frac{\kappa(\boldsymbol{V}) \cdot \|\boldsymbol{E}\|}{\min_j |\tilde\lambda - \boldsymbol{\Sigma}_{jj}|},
\end{equation}
which rearranges to the claimed bound.
\qedhere
\end{proof}

\noindent\textbf{Proof of (i).}
Then, we show that $\boldsymbol{{A}}_k$ is a Hurwitz matrix, thereby showing that in TensorSSM, spatial information integrated into the transition operator will maintain the basic property of SSM, so that it can be applied to  various sizes of tensors and tokenization settings.

\begin{proof}
Apply Lemma \ref{lem:bauer-fike} to $\boldsymbol{A} = \boldsymbol{V} \boldsymbol{\Lambda} \boldsymbol{V}^{-1}$ with the perturbation
$\boldsymbol{E} = \alpha u_k \boldsymbol{P} \boldsymbol{Q}^{\top}$.
Note that
$\|\boldsymbol{E}\| = |\alpha u_k| \cdot \|\boldsymbol{P}\| \cdot \|\boldsymbol{Q}\| = \epsilon_k$ (operator norm of a rank-one matrix $\boldsymbol{P} \boldsymbol{Q}^{\top}$ equals $\|\boldsymbol{P}\| \|\boldsymbol{Q}\|$). Then using Bauer-Fike theorem, we have:
\begin{equation}\label{eq:Ak-perturb}
\min_{j} \bigl| \lambda_n(\boldsymbol{A}_k) - \lambda_j(\boldsymbol{A}) \bigr|
\;\leq\; \kappa(\boldsymbol{V}) \cdot \epsilon_k, \qquad \forall n.
\end{equation}

By symmetry of the bound, this implies there exists a permutation $\sigma$ of
$\{0, 1, \ldots, N-1\}$ such that for every $n$,
\begin{equation}\label{eq:Ak-bounded}
\bigl| \lambda_n(\boldsymbol{A}_k) - \lambda_{\sigma(n)}(\boldsymbol{A}) \bigr| \;\leq\; \kappa(\boldsymbol{V}) \cdot \epsilon_k.
\end{equation}

Choose $\delta_0 = \min_{n \neq m} \frac{|\lambda_n(\boldsymbol{A}) - \lambda_m(\boldsymbol{A})|}{2 \kappa(\boldsymbol{V})}$.
Since $\lambda_n(\boldsymbol{A}) = -(n+1)$ are pairwise distinct with consecutive differences equal to $1$, we can take
\begin{equation}\label{eq:delta-0}
\delta_0 \;=\; \frac{1}{2 \kappa(\boldsymbol{V})} > 0.
\end{equation}

When $\epsilon_k < \delta_0$, the bound \eqref{eq:Ak-bounded} is strictly less than
$\frac{1}{2}$ of any consecutive $|\lambda_n - \lambda_m|$, so the permutation $\sigma$ is
forced to be the identity $\sigma(n) = n$. Therefore
\begin{equation}
\label{eq:Ak-perturb-id}
\begin{aligned}
    \bigl| \lambda_n(\boldsymbol{A}_k) - \lambda_n(\boldsymbol{A}) \bigr|
    &\;=\; \bigl| \lambda_n(\boldsymbol{A}_k) + (n+1) \bigr|\\
    &\;\leq\; \kappa(\boldsymbol{V}) \cdot \epsilon_k \;<\; \frac{1}{2}.
\end{aligned}
\end{equation}

Taking real parts, we derive the upper bound of $\operatorname{Re}\!\bigl(\lambda_n(\boldsymbol{A}_k)\bigr)$:
\begin{equation}
    \begin{aligned}
    \operatorname{Re}\!\bigl(\lambda_n(\boldsymbol{A}_k)\bigr)
    &\leq -(n+1) + \bigl| \lambda_n(\boldsymbol{A}_k) + (n+1) \bigr|\\
    &\leq -(n+1) + \kappa(\boldsymbol{V}) \cdot \epsilon_k \\
    &< -(n+1) + \dfrac{1}{2}. \label{eq:Ak-real-part}
    \end{aligned}
\end{equation}

For $n \geq 1$, $-(n+1) + \tfrac{1}{2} \leq -\tfrac{3}{2} < 0$, hence $\operatorname{Re}(\lambda_n(\boldsymbol{A}_k)) \leq -\tfrac{3}{2} < 0$. For $n = 0$, $-(n+1) + \tfrac{1}{2} = -\tfrac{1}{2} < 0$ as well. Therefore, all eigenvalues of $\boldsymbol{A}_k$ have strictly negative real parts, i.e., $\boldsymbol{A}_k$ is Hurwitz, which provides evidence for the stable linear system. This proves (i).
\qedhere
\end{proof}

\noindent\textbf{Proof of (ii)}. To show the stability of TensorSSM, we prove the eigenvalues of discretized $\boldsymbol{\bar{A}}$ is smaller than 1. Concretely, for any eigenvalue $\lambda$ with $\operatorname{Re}(\lambda) < 0$, the corresponding eigenvalue of $\bar{\boldsymbol{A}} = (\boldsymbol{I} - \tfrac{\Delta t}{2} \boldsymbol{A})^{-1}(\boldsymbol{I} + \tfrac{\Delta t}{2} \boldsymbol{A})$ is $\mu = (1 + \tfrac{\Delta t}{2} \lambda) / (1 - \tfrac{\Delta t}{2} \lambda)$ with $|\mu| < 1$.

\begin{proof}
Since $\boldsymbol{A}_k$ is Hurwitz by (i) and is similar to $\boldsymbol{\Lambda} + \alpha u_k \tilde{\boldsymbol{P}} \tilde{\boldsymbol{Q}}^{\top}$,
the same eigenvector computation in the diagonal basis of $\boldsymbol{\Lambda}$ applies to $\boldsymbol{A}_k$:
for any eigenpair $(\tilde{\boldsymbol{v}}_n, \lambda_n(\boldsymbol{A}_k))$,

\begin{equation}
    \begin{aligned}
    \bar{\boldsymbol{A}}_k \tilde{\boldsymbol{v}}_n
    &= \Bigl(\boldsymbol{I} - \tfrac{\Delta t}{2} \boldsymbol{A}_k\Bigr)^{-1} \Bigl(\boldsymbol{I} + \tfrac{\Delta t}{2} \boldsymbol{A}_k\Bigr) \tilde{\boldsymbol{v}}_n \notag \\
    &= \frac{1 + \tfrac{\Delta t}{2} \lambda_n(\boldsymbol{A}_k)}{1 - \tfrac{\Delta t}{2} \lambda_n(\boldsymbol{A}_k)} \, \tilde{\boldsymbol{v}}_n
    \;=:\; \mu_n(\bar{\boldsymbol{A}}_k) \, \tilde{\boldsymbol{v}}_n.
    \end{aligned}
\end{equation}

We assume $1 - \tfrac{\Delta t}{2} \lambda_n(\boldsymbol{A}_k) \neq 0$ (this holds because $\operatorname{Re}(\lambda_n(\boldsymbol{A}_k)) < 0$ by Hurwitz, so $\lambda_n(\boldsymbol{A}_k)$  is non-real-positive and $\Delta t \ge 0$).  Substituting into the definition $\bar{\boldsymbol{A}}_k = \bigl(\boldsymbol{I} - \tfrac{\Delta t}{2} \boldsymbol{A}_k\bigr)^{-1}
\bigl(\boldsymbol{I} + \tfrac{\Delta t}{2} \boldsymbol{A}_k\bigr)$, we expand term by term:
\begin{equation}
\begin{aligned}
\bar{\boldsymbol{A}}_k \tilde{\boldsymbol{v}}_n
&= \Bigl(\boldsymbol{I} - \tfrac{\Delta t}{2} \boldsymbol{A}_k\Bigr)^{-1}
   \Bigl(\boldsymbol{I} + \tfrac{\Delta t}{2} \boldsymbol{A}_k\Bigr) \tilde{\boldsymbol{v}}_n \\
&= \Bigl(\boldsymbol{I} - \tfrac{\Delta t}{2} \boldsymbol{A}_k\Bigr)^{-1}
   \bigl(\tilde{\boldsymbol{v}}_n + \tfrac{\Delta t}{2} \boldsymbol{A}_k \tilde{\boldsymbol{v}}_n\bigr) \\
&= \Bigl(\boldsymbol{I} - \tfrac{\Delta t}{2} \boldsymbol{A}_k\Bigr)^{-1}
   \bigl(\tilde{\boldsymbol{v}}_n + \tfrac{\Delta t}{2} \lambda_n(\boldsymbol{A}_k) \tilde{\boldsymbol{v}}_n\bigr) \\
&= \Bigl(\boldsymbol{I} - \tfrac{\Delta t}{2} \boldsymbol{A}_k\Bigr)^{-1}
   \Bigl(1 + \tfrac{\Delta t}{2} \lambda_n(\boldsymbol{A}_k)\Bigr) \tilde{\boldsymbol{v}}_n
   \\
&= \Bigl(1 + \tfrac{\Delta t}{2} \lambda_n(\boldsymbol{A}_k)\Bigr)
   \Bigl(\boldsymbol{I} - \tfrac{\Delta t}{2} \boldsymbol{A}_k\Bigr)^{-1} \tilde{\boldsymbol{v}}_n.
\end{aligned}
\end{equation}
Using the eigenpair equation again,
\begin{equation}
    \Bigl(\boldsymbol{I} - \tfrac{\Delta t}{2} \boldsymbol{A}_k\Bigr) \tilde{\boldsymbol{v}}_n
   = \tilde{\boldsymbol{v}}_n - \tfrac{\Delta t}{2} \lambda_n(\boldsymbol{A}_k) \tilde{\boldsymbol{v}}_n
   = \Bigl(1 - \tfrac{\Delta t}{2} \lambda_n(\boldsymbol{A}_k)\Bigr) \tilde{\boldsymbol{v}}_n.
\end{equation}
By the assumption $1 - \tfrac{\Delta t}{2} \lambda_n(\boldsymbol{A}_k) \neq 0$, we can invert:
\begin{equation}\label{eq:invert-v-n-A-k}
\Bigl(\boldsymbol{I} - \tfrac{\Delta t}{2} \boldsymbol{A}_k\Bigr)^{-1} \tilde{\boldsymbol{v}}_n
   \;=\; \frac{1}{1 - \frac{\Delta t}{2} \lambda_n(\boldsymbol{A}_k)} \, \tilde{\boldsymbol{v}}_n.
\end{equation}
Therefore, we derive the eigenvalue of $\boldsymbol{\bar{A}}$:
\begin{align}
\bar{\boldsymbol{A}}_k \tilde{\boldsymbol{v}}_n
&= \Bigl(1 + \tfrac{\Delta t}{2} \lambda_n(\boldsymbol{A}_k)\Bigr)
   \cdot \frac{1}{1 - \frac{\Delta t}{2} \lambda_n(\boldsymbol{A}_k)} \, \tilde{\boldsymbol{v}}_n \notag \\[3pt]
&= \frac{1 + \frac{\Delta t}{2} \lambda_n(\boldsymbol{A}_k)}{1 - \frac{\Delta t}{2} \lambda_n(\boldsymbol{A}_k)} \,
   \tilde{\boldsymbol{v}}_n
   \;=:\; \mu_n(\bar{\boldsymbol{A}}_k) \, \tilde{\boldsymbol{v}}_n. \notag
\end{align}

By (i), $\operatorname{Re}(\lambda_n(\boldsymbol{A}_k)) < 0$, so the bilinear map sends it to a point in
the open unit disk:
\begin{equation}\label{eq:Abar-k-modulus}
\bigl| \mu_n(\bar{\boldsymbol{A}}_k) \bigr|
\;=\; \left| \frac{1 + \tfrac{\Delta t}{2} \lambda_n(\boldsymbol{A}_k)}{1 - \tfrac{\Delta t}{2} \lambda_n(\boldsymbol{A}_k)} \right|
\;<\; 1.
\end{equation}
The strict inequality follows from the A-stability of the bilinear transform: the
mapping $\lambda \mapsto (1 + \tfrac{\Delta t}{2} \lambda)/(1 - \tfrac{\Delta t}{2} \lambda)$
is a Möbius transformation sending the open left half-plane bijectively onto the open
unit disk (and the imaginary axis onto the unit circle).  This proves (ii).
\qedhere
\end{proof}

\noindent\textbf{Proof of (iii)}.
Recall from the definitions of $\tau_n^{(k)}$ and $\tau_n$
\begin{equation}
\label{eq:tau-def-2}
\tau_n^{(k)} = -\frac{1}{\operatorname{Re}\!\bigl(\lambda_n(\boldsymbol{A}_k)\bigr)},
\qquad
\tau_n = -\frac{1}{\lambda_n(\boldsymbol{A})} = \frac{1}{n+1},
\end{equation}
with $\tau_n^{(k)}, \tau_n > 0$ by (i).


Applying the Lagrange’s Mean Value Theorem to the function $g(x) = -1/x$ on $(0, \infty)$, there exists $\xi \in (a,b)$ such that
\begin{equation}
\label{lagragian}
    g(b)-g(a) = g'(\xi)(b-a) = \frac{b-a}{\xi^{2}},
    \qquad g'(x) = \frac{1}{x^{2}}.
\end{equation}
Since $\xi > a$, we have $1/\xi^{2} < 1/a^{2}$, and hence
\begin{equation}
\label{lagragian_bound}
    \lvert g(b)-g(a)\rvert
    \;=\; \frac{\lvert a-b\rvert}{\xi^{2}}
    \;\leq\;
    \frac{\lvert a-b\rvert}{a^{2}}.
\end{equation}

Concretely, let $a = \tau_n = 1/(n+1)$ and $b = \tau_n^{(k)} = 1/(-\operatorname{Re}\lambda_n(\boldsymbol{A}_k))$.
From \eqref{eq:Ak-perturb-id},
\begin{equation}\label{eq:re-part-perturb}
\bigl| \operatorname{Re}\!\bigl(\lambda_n(\boldsymbol{A}_k)\bigr) - \lambda_n(\boldsymbol{A}) \bigr|
\;\leq\; \kappa(\boldsymbol{V}) \cdot \epsilon_k,
\end{equation}
so by applying the~\eqref{eq:re-part-perturb} to~\eqref{lagragian}, we have
\begin{equation}\label{eq:b-a-perturb}
|b - a| \;=\; \bigl| \tau_n^{(k)} - \tau_n \bigr| \;\leq\; \frac{\kappa(\boldsymbol{V}) \cdot \epsilon_k}{((n+1) - \kappa(\boldsymbol{V}) \epsilon_k)(n+1)}.
\end{equation}
For $\epsilon_k < \delta_0 = 1/(2\kappa(\boldsymbol{V}))$, the denominator
is bounded below by $\tfrac{1}{2}(n+1) \cdot (n+1) = \tfrac{(n+1)^2}{2}$.  Therefore,
\begin{equation}\label{eq:tau-perturb-2}
\bigl| \tau_n^{(k)} - \tau_n \bigr|
\;\leq\; \frac{2 \kappa(\boldsymbol{V}) \cdot \epsilon_k}{(n+1)^2}
\;\leq\; 2 \kappa(\boldsymbol{V}) \cdot \epsilon_k,
\end{equation}
since $(n+1)^2 \geq 1$.  This establishes \eqref{eq:tau-perturb} with
$C_{\tau} = 2$.

For the dynamic range \eqref{eq:tau-range}, write
\begin{align}
\frac{\tau_0^{(k)}}{\tau_{N-1}^{(k)}}
&= \frac{\tau_0 + (\tau_0^{(k)} - \tau_0)}{\tau_{N-1} + (\tau_{N-1}^{(k)} - \tau_{N-1})} \notag \\
&= \frac{\tau_0}{\tau_{N-1}} \cdot
\frac{1 + (\tau_0^{(k)} - \tau_0)/\tau_0}{1 + (\tau_{N-1}^{(k)} - \tau_{N-1})/\tau_{N-1}}.
\end{align}
Since $\tau_0 = 1$ and $\tau_{N-1} = 1/N$, the leading ratio is $N$. The fractional perturbations in numerator and denominator are each $O(\kappa(\boldsymbol{V}) \epsilon_k)$ by
\eqref{eq:tau-perturb-2}. A first-order expansion gives
\begin{equation}\label{eq:range-expand}
\frac{\tau_0^{(k)}}{\tau_{N-1}^{(k)}}
\;=\; N \cdot \bigl(1 + O(\kappa(\boldsymbol{V}) \epsilon_k)\bigr).
\end{equation}
This proves (iii). The input-dependent perturbation preserves the multi-scale memory structure of the HiPPO-LegS matrix: the memorization pattern at the dimension $n \in \{0, \ldots, N-1\}$ of fine-tuned transition operator  still has a characteristic decay timescale $\tau_n^{(k)} := -1/\operatorname{Re}\lambda_n(\boldsymbol{A}_k)$ within $O(\kappa(\boldsymbol{V})\epsilon_k)$ of the original $\tau_n = 1/(n+1)$, and the dynamic range $\max_n \tau_n^{(k)} / \min_n \tau_n^{(k)}$ stays $\Theta(N)$ rather than collapsing to $1$. This guarantees that the fine-tuned spatial-aware system compresses the input history at $N$ distinct temporal resolutions; no information is lost across the rank-one Dynamic-$\boldsymbol{A}$ update.
\qedhere
\end{proof}

\subsection{First-Order Refinement via Secular Equation}
\label{sec:secular-equation}

The Bauer-Fike bound \eqref{eq:Ak-perturb-id} is sharp in the worst case but can be sharpened when the rank-one structure of $\boldsymbol{E}$ is exploited.  The eigenvalues of $\boldsymbol{A}_k$ are the roots of the \textbf{secular equation}
\begin{equation}\label{eq:secular}
1 \;=\; \alpha u_k \sum_{n=0}^{N-1} \frac{\tilde{\boldsymbol{Q}}_n \tilde{\boldsymbol{P}}_n}{\lambda - \lambda_n},
\qquad \text{where } \lambda_n = -(n+1),
\end{equation}
obtained by applying the matrix determinant lemma to $\det(\lambda \boldsymbol{I} - \boldsymbol{A}_k) = 0$
with the diagonalization \eqref{eq:Ak-diag}.

For small $\alpha |u_k|$, expanding the secular equation near each $\lambda_n$ gives the
first-order eigenvalue perturbation
\begin{equation}\label{eq:lambda-1st-order}
\lambda_n(\boldsymbol{A}_k) \;=\; \lambda_n + \alpha u_k \, \tilde{\boldsymbol{Q}}_n \tilde{\boldsymbol{P}}_n + O\bigl((\alpha u_k)^2\bigr),
\end{equation}
where $\tilde{\boldsymbol{P}}_n, \tilde{\boldsymbol{Q}}_n$ are the $n$-th components of $\tilde{\boldsymbol{P}} = \boldsymbol{V}^{-1} \boldsymbol{P}$ and
$\tilde{\boldsymbol{Q}} = \boldsymbol{V}^{\top} \boldsymbol{Q}$.

\noindent \textbf{Interpretation of the first-order refinement}.
The coefficient $\tilde{\boldsymbol{Q}}_n \tilde{\boldsymbol{P}}_n$ is the projection of the rank-one perturbation onto the $n$-th eigenvector direction (in the diagonal basis of $\boldsymbol{A}$).  Different eigen-modes are shifted by different amounts depending on the alignment of $\boldsymbol{P}, \boldsymbol{Q}$ with the eigenvectors of $\boldsymbol{A}$.
In particular:
\begin{itemize}[leftmargin=*]
    \item If $\tilde{\boldsymbol{Q}}_n \tilde{\boldsymbol{P}}_n > 0$, the $n$-th mode becomes \emph{less negative}
    ($\lambda_n(\boldsymbol{A}_k)$ closer to zero), and the memory timescale $\tau_n^{(k)}$ \emph{increases}.
    \item If $\tilde{\boldsymbol{Q}}_n \tilde{\boldsymbol{P}}_n < 0$, the $n$-th mode becomes \emph{more negative}
    and the memory timescale \emph{decreases}.
    \item Modes with $\tilde{\boldsymbol{Q}}_n \tilde{\boldsymbol{P}}_n \approx 0$ are essentially unperturbed
    (corresponding to directions in which $\boldsymbol{P}$ and $\boldsymbol{Q}$ are orthogonal to the eigenvector).
\end{itemize}
The dynamic-$\boldsymbol{A}$ mechanism thus performs an \emph{input-dependent spatial-aware retuning of memory timescales}, and this is realized through a rank-one mechanism with cross-mode positional information compatible with HiPPO structure.